\documentclass[twoside]{article}

\usepackage[accepted]{aistats2022}

\setlength{\pdfpageheight}{11in}
\setlength{\pdfpagewidth}{8.5in}
\usepackage[round,sort&compress]{natbib}

\bibliographystyle{apalike}

\usepackage{caption} 
\captionsetup[table]{skip=10pt}
\usepackage{enumitem}
\usepackage{amsthm}
\usepackage{amsfonts}
\usepackage{amsmath}
\usepackage{bm}
\usepackage{amssymb}
\usepackage{graphicx} 
\usepackage{nccmath}
\usepackage{subfigure}
\usepackage{thmtools}
\usepackage{thm-restate}
\usepackage{hyperref}       %
\hypersetup{colorlinks=true, linkcolor=red, citecolor=blue,urlcolor=black}
\usepackage{xcolor}         %
\usepackage{url}
\usepackage{cleveref}

\theoremstyle{plain}
\newtheorem{definition}{Definition}

\theoremstyle{definition}
\newtheorem{remark}{Remark}

\usepackage{soul}

\newcommand{\ci}{\!\perp\!}
\newcommand{\mycolon}{\!:\!}
\newcommand{\midbar}{{\, | \,}}

\DeclareMathOperator*{\argmin}{arg\,min}

\newcommand{\R}{\mathbb{R}}
\DeclareMathOperator*{\E}{\mathbb{E}}
\DeclareMathOperator*{\Cov}{\mathrm{Cov}}
\newcommand{\norm}[1]{\left\|#1\right\|}

\newcommand{\cX}{\mathcal{X}}
\newcommand{\cY}{\mathcal{Y}}
\newcommand{\cZ}{\mathcal{Z}}
\newcommand{\cH}{\mathcal{H}}
\newcommand{\cF}{\mathcal{F}}

\renewcommand{\O}{\mathcal{O}}

\newcommand{\x}{\boldsymbol{x}}
\newcommand{\z}{\boldsymbol{z}}
\newcommand{\y}{\boldsymbol{y}}
\newcommand{\xSample}{\boldsymbol{x}_{i}}

\newcommand{\ySample}{\boldsymbol{y}_{i}}
\newcommand{\xMatrix}{X}

\newcommand{\yMatrix}{Y}

\newcommand{\bM}{B}

\newcommand{\cFuntion}{f}
\newcommand{\cFuntionStar}{{f^{*}}}
\newcommand{\hatcFuntion}{\hat{f}}

\newcommand{\PopulationLoss}{\mathcal{L}}

\newcommand{\dimx}{{d_x}}
\newcommand{\dimz}{{d_z}}
\newcommand{\dimy}{{d_y}}
\newcommand{\dimc}{{d_f}}
\newcommand{\samples}{n}

\newcommand{\tr}{\operatorname{tr}}

\usepackage{nccmath}

\definecolor{blue}{HTML}{241fed}
\definecolor{red}{HTML}{ED1C24}

\begin{document}

\twocolumn[

\aistatstitle{Can Pretext-Based Self-Supervised Learning Be Boosted by Downstream Data? A Theoretical Analysis}

\aistatsauthor{  Jiaye Teng* \And Weiran Huang* \And  Haowei He* }

\aistatsaddress{ IIIS, Tsinghua University \And Huawei Noah's Ark Lab \And IIIS, Tsinghua University } ]

\begingroup
\begin{NoHyper}
\renewcommand\thefootnote{*}
\footnotetext{Equal contribution.
Correspondence to Weiran Huang $<$weiran.huang@outlook.com$>$.}
\end{NoHyper}
\endgroup

\begin{abstract}
Pretext-based self-supervised learning learns the semantic representation via a handcrafted pretext task over unlabeled data and then uses the learned representation for downstream tasks, which effectively reduces the sample complexity of downstream tasks under Conditional Independence (CI) condition~\citep{lee2020predicting}.
However, the downstream sample complexity gets much worse if the CI condition does not hold.
One interesting question is whether we can make the CI condition hold by using downstream data to refine the unlabeled data to boost self-supervised learning.
At first glance, one might think that seeing downstream data in advance would always boost the downstream performance.
However,
we show that it is not intuitively true and point out that in some cases, it hurts the final performance instead.
In particular, we prove both model-free and model-dependent lower bounds of the number of downstream samples used for data refinement.
Moreover, we conduct various experiments on both synthetic and real-world datasets to verify our theoretical results.

\end{abstract} 
\newcommand{\procedure}{Processor-based SSL (PSSL) \ }

\section{INTRODUCTION}
Data representations used to be learned in a supervised or semi-supervised learning way, e.g., \citet{he2016deep,lee2013pseudo}.
Recently, self-supervised learning has drawn massive attention for its fantastic data efficiency and generalization ability, with many state-of-the-art models following this paradigm in computer vision \citep{chen2020simple,he2020momentum}, language modeling \citep{devlin2018bert,radford2018improving}, graph learning \citep{peng2020self}, etc.
It learns data representations through self-supervised tasks 
and then uses the learned representations for downstream prediction tasks.
Such a paradigm involves many unlabeled data that are easier to access to reduce the sample complexity of labeled data.

The renaissance of self-supervised learning began with artificially designed {\em pretext tasks} as self-supervised tasks, such as image colorization \citep{zhang2016colorful}, inpainting \citep{pathak2016context}, solving jigsaw puzzles \citep{noroozi2016unsupervised}, and predicting image rotations \citep{gidaris2018unsupervised} in computer vision, and predicting next word \citep{radford2018improving} in natural language processing.
Meanwhile, empirical evidence reveals that the representations learned from inappropriate pretext tasks may fail to transfer to downstream tasks \citep{yamaguchi2019multiple,zoph2020rethinking}.
Therefore, how pretext tasks affect the performance of downstream tasks is crucial to pretext-based self-supervised learning.

There are some empirical studies of the above question (e.g., \citet{metzger2020evaluating}), 
however, the theoretical study is still at an early stage.
A recent work~\citep{lee2020predicting} proves that, under conditional independence between the components of the pretext task conditional on the downstream label,
the downstream task achieves minimal sample complexity.
In particular, it shows that for Gaussian variables, the sample complexity of downstream tasks is reduced to $\Tilde{\O}(\text{dim}(\y))$\footnote{We use notation $\Tilde{\O}$ to hide log factors in this paper, and abbreviation ``$\text{dim}$'' stands for dimension.} 
when Conditional Independence (CI) $\x \ci \z \midbar \y $ holds, where $\x$, $\z$, $\y$ denotes the input variable, pretext label, and downstream label, respectively. 
As a comparison, the sample complexity of directly using $\x$ to predict $\y$ is $\tilde\O(\text{dim}(\x))$, where the dimension of $\x$ is supposed to be much larger than the dimension of~$\y$.
However, if the CI condition $\x \ci \z \midbar \y $ does not hold, the downstream sample complexity increases to $\Tilde{\O}(\text{dim}(\z))$.
This is because there is redundant mutual information between $\x$ and $\z$ which is unrelated to $\y$. %

In practice, the CI condition rarely holds, and thus self-supervised learning cannot realize its full potential.
An interesting question raises: 
\begin{center}\emph{Can we make the CI condition hold with the help of downstream data to boost self-supervised learning?}
\end{center}
To formulate the above question, we
introduce a learnable function~$\cFuntion$ (called data processor) to refine the pretext data such that  $\cFuntion(\x) \ci \z \midbar \y$ holds, and the downstream data (or extra data from downstream data distribution) are allowed to be accessed to learn the processor.
If such processor~$\cFuntion$ exists, 
according to the result of \citet{lee2020predicting},
the downstream sample complexity is reduced by
replacing all the unlabeled data $\x$ with $\cFuntion(\x)$.
Otherwise, self-supervised learning cannot be boosted by the downstream data (w.r.t. the order of sample complexity).

At first glance, one might think that seeing downstream data in advance would always boost downstream task performance.
However, we show that the above intuition is not always true and point out that the downstream performance will be hurt in some cases.
Therefore, our results validate self-supervised learning in some sense since we prove that it is better not to use downstream data in such cases, as standard self-supervised learning does.

Specifically, we first rigorously formulate the necessary conditions that data processor $f$ needs to satisfy in Section~\ref{sec:preliminary}.
One is a relaxation of condition $\cFuntion(\x) \ci \z \midbar \y$, and the other condition ensures that $f(\x)$ has enough ability to predict $\y$.
We then design an ingenious loss function to learn such function $f$ in Section~\ref{sec: coarse}.
We demonstrate the rationality of the loss by proving that any function $f$ minimizing the proposed loss satisfies the above criteria. 
Based on the proposed loss, we theoretically study the number of downstream data required.
We derive a model-free lower bound in Section~\ref{sec:warm-up} and show that involving limited downstream data provably hurts model performance in the representation learning process.
Furthermore, we consider the model capacity of function class and
provide a model-dependent lower bound in Section~\ref{sec:model-based result}.
The lower bound gets higher when the model capacity gets larger, indicating that the number of downstream samples needs to match the model capacity.
To verify our theoretical results, we conduct experiments on both synthetic and real-world datasets in Section~\ref{sec:experiments}.
Due to space limitations, all proofs
are deferred to the supplementary material.

In summary, our contributions are listed as follows:
\begin{itemize}[leftmargin=20pt]
\item \emph{Novelty}: We rigorously prove that learning the processor $f$ with insufficient downstream data fails to boost self-supervised learning, which contradicts the intuitive wisdom and validates the standard self-supervised learning framework.

\item \emph{Technical Contributions}: We provide both model-free and model-dependent lower bounds for the sample size required in the training processor. Moreover, we show that more complex model classes are more likely to lead to failure.

\item \emph{Experiments}: We conduct several experiments on both synthetic datasets and real-world datasets to verify our theoretical results.
We observe that 
training the processor with insufficient downstream data significantly decreases downstream task performance. 

\end{itemize}

\newcommand{\ext}{n_0}

\section{NOTATIONS AND PROBLEM FORMULATION}
\label{sec:preliminary}

\textbf{Data Distribution.}
Let $\x \in\cX\subset \R^{\dimx}$, $\z\in\cZ\subset \R^{\dimz}$, $\y\in\cY\subset \R^{\dimy}$ denote the input variable, pretext label, and downstream label, respectively,
where the data are sampled from a joint distribution $(\x, \y, \z) \sim \mathcal{P}$ and $\dimx \ge \dimz \ge \dimy$.
The pretext dataset with $n_1$ samples $( X_{pre}, Z_{pre} ) \in \R^{\dimx\!\times\!\!\:n_1} \times\R^{\dimz\!\times\!\!\:n_1}$ and the downstream dataset with $n_2$ samples $( X_{down}, Y_{down} )\in\R^{\dimx\!\times \!\!\:n_2}\times\R^{\dimy\!\times\!\!\:n_2}$ are i.i.d. drawn from the joint distribution of $(\x, \z)$ and $(\x, \y)$, respectively.
We assume $\mathbb{E} [\x] = \mathbb{E} [\z] =0$ without loss of generality and denote $\Cov[\x, \z | \y] \triangleq \mathbb{E}[\x \z^\top ] - \mathbb{E}[\x\y^\top ] (\mathbb{E}[\y \y^\top])^{-1} \mathbb{E}[\y \z^\top]$ the partial covariance matrix.
Unless stated otherwise, we consider the $L_2$-loss function in the following content\footnote{When subscript is omitted, notation~$\norm{\cdot}$ stands for $L_2$-norm or Frobenius norm for vectors and matrices.}.

\textbf{Pretext-based Self-Supervised Learning.}
In this paper, we follow the formulation of pretext-based self-supervised learning proposed in \citet{lee2020predicting}.
The algorithm is roughly split into two steps:\\
\raisebox{1pt}{$\;\scriptstyle\bullet\;$}
 \emph{Step 1 (pretext):} Learn representation $\widehat\psi$ from  pretext task samples $(X_{pre}, Z_{pre})$.\\
\raisebox{1pt}{$\;\scriptstyle\bullet\;$}
\emph{Step 2 (downstream):} Perform linear regression of $Y_{down}$ on $\widehat\psi(X_{down})$ and get $\widehat W$.

We next introduce the two steps in detail.
We first learn a representation $\widehat\psi \colon \R^{\dimx} \rightarrow \R^{\dimz}$ by minimizing the empirical risk\footnote{When the context is clear, we abuse the notation $\hat{\psi}\colon \R^{d_x \times n} \rightarrow \R^{d_z \times n}$ over a dataset to denote $\hat{\psi}\colon \R^{\dimx} \rightarrow \R^{\dimz}$ over each data point in the dataset.}:
 \begin{align*}
        \widehat\psi=\argmin_{\psi\in \cH}  \frac{1}{n_1}\norm{ \psi(X_{pre})-Z_{pre}}^2,
\end{align*}
where $\cH$ denotes the candidate function class (e.g., the class of multi-layer neural networks).
Then a linear layer $\widehat{W}\in{\R^{\dimy\! \times\!\!\; \dimz}}$ following the representation $\widehat\psi(\cdot)$ is learned via the downstream task, i.e.,
\begin{align*}
    \widehat W = \argmin_{W \in \R^{\dimy \times \dimz}} \frac{1}{n_2} \norm {W \widehat \psi(X_{down})-Y_{down}}^2,
\end{align*}
    and the final predictor for the downstream task is $\hat g(\cdot)=\widehat W \widehat\psi(\cdot)$.

\textbf{Conditional Independence.}
\citet{lee2020predicting} point out that Conditional Independence (CI) condition $\x \ci \z \midbar \y$ plays crucial roles in self-supervised learning, which largely reduce the downstream sample complexity. 
However, without the guarantee of CI condition, the downstream sample complexity gets worse because there is much redundant information (irrelevant to the downstream task) involved in the representation during the learning via the pretext task. 
Therefore, for the downstream task, the mapping from the redundant features to the downstream label also needs to be learned, leading to a larger sample complexity.

The above discussion inspires us to introduce 
a learnable data processor $f$, which refines the unlabeled data such that CI condition $f(\x) \ci \z \mid \y$ holds.
The processor $f$ can help reduce the downstream sample complexity by simply replacing all the unlabeled data $X$ with $f(X)$ in self-supervised learning.
Formally, the following two criteria are essential for a meaningful processor\footnote{Assume that $\mathbb{E}[\cFuntion(\x)] = 0$ for simplicity.} $f$:
\begin{gather}
   \Cov[\cFuntion(\x), \z \midbar \y]= 0  \tag{C1}\label{C1},\\
  f \in \argmin_f  \E\norm{\y- W_{\y,f(\x)}^* \cFuntion(\x) }^2 , \tag{C2}\label{C2}
\end{gather}
where $W_{\y,f(\x)}^*\triangleq\argmin_{W\in \R^{\dimy\! \times\!\!\; \dimc}} \E\norm{\y-W\cFuntion(\x)}^2$ 
is defined as the best linear predictor of $\y$ on $f(\x)$. 

Criterion~\eqref{C1} guarantees that the processor $\cFuntion(\x)$ removes the redundant information which is useless for predicting $\y$.
In fact, Criterion~\eqref{C1} is a Conditional Uncorrelatedness (CU) condition, which is a relaxation of the CI condition. 
We use CU instead of CI because CU is weaker than CI when proving a negative result.
In addition, setting $\Cov[\cFuntion(\x), \z \midbar \y]=0$ as the only objective may lead to non-informative processors.
For example, if $\cFuntion(\x)$ is an independently random noise, Criterion~\eqref{C1} naturally holds, but such $\cFuntion(\x)$ does not carry any information of $\y$ and thus cannot be used to predict $\y$.
Therefore, we need Criterion~\eqref{C2} to ensure that applying function $\cFuntion$ to input variable $\x$ maintains the information for predicting $\y$.

To summarize, if processor $f$ fails to satisfy Criterion~\eqref{C1}, the downstream sample complexity cannot be improved according to the results in \citet{lee2020predicting}. 
If processor $f$ fails to satisfy Criterion~\eqref{C2}, $f(\x)$ will not contain all the information for predicting $\y$, leading to a at least constant generalization error. 

Can we learn a processor $f$ which satisfies both Criterion~\eqref{C1} and \eqref{C2}?
To answer it, 
we first propose a provably rational loss for learning $f$ in Section~\ref{sec: coarse}.
Then we show that a training processor with insufficient downstream data provably fails to get a processor satisfying the above criteria simultaneously in some cases in Section~\ref{sec:main result}.
Moreover, we conduct experiments and observe the existence of the above phenomenon shown in Section~\ref{sec:experiments}.

\newcommand{\Xss}{{X_{down1}}}
\newcommand{\Yss}{{Y_{down1}}}
\newcommand{\Xssdown}{{X_{down2}}}
\newcommand{\Yssdown}{{Y_{down2}}}

\section{LEARNING PROCESSOR \texorpdfstring{$\bm f$}{f}}\label{sec: coarse}

In this section, we focus on learning the processor $f$ satisfying both
Criterion~\eqref{C1} and \eqref{C2}.
We first propose a carefully designed loss in Section~\ref{sec:loss} and then give proof of its rationality in Section~\ref{sec:rationality}.

\subsection{Loss Design}\label{sec:loss}
To learn the processor $f$, one needs to obtain a portion of downstream data before the pretext tasks\footnote{{We note that there is a different setting that provides extra data before the pretext tasks. Similar techniques can be directly applied into the extra data settings.}} since Criterion~\eqref{C1} is related to the downstream label $\y$.
We split the downstream dataset $(X_{down}, Y_{down})$ to two folds, one with $n_0$ samples for processor training $(\Xss, \Yss)$ and the other with $n_2 - n_0$ samples for the original downstream tasks $(\Xssdown, \Yssdown)$.

\textbf{Criterion~\eqref{C1}.}
Recall that Criterion~\eqref{C1} requires that $\cFuntion(\x)$ and $\z$ are uncorrelated conditional on $\y$. 
This indicates that $f(\x)$ should have less ability to predict $\z$ under linear regimes,
which can be measured by loss\footnote{We use a linear layer following the processor $f(\x)$ as the representation $\widehat \psi(\x)$ for simplicity since processor $\cFuntion$ is allowed to be non-linear.}
\begin{align*}
    \PopulationLoss_1=- \E_{\x,\z}\norm{ \z - W^*_{\z, \cFuntion(\x)} \cFuntion(\x)}^2,
\end{align*}
where $W^*$ is defined as the best linear projection from the second subscript to the first subscript, i.e.,
\begin{align*}
W^*_{{\bm b}, {\bm a}} &\triangleq \argmin_{W}  \E_{{\bm b}, {\bm a}} \norm{{\bm b} - W {\bm a}}^2.
\end{align*}
If $f(\x)$ can fit $\z$ well, then $\PopulationLoss_1$ will be large.
Thus, minimizing $\PopulationLoss_1$ tends to pull $\z$ and $f(\x)$ apart.

\textbf{Criterion~\eqref{C2}.}
For Criterion~\eqref{C2}, we need to guarantee that applying $f$ to $\x$ does not lose the information for predicting $\y$, indicating that $f(\x)$ can still fit $\y$ well.
It can be formulated as
\begin{align*}
    \PopulationLoss_2= \E_{\x,\y}\norm{ \y - W^*_{\y, \cFuntion(\x)} \cFuntion(\x)}^2.
\end{align*}
The smaller $\PopulationLoss_2$ indicates that $f(\x)$ contains more information related to $\y$.

To minimize $\PopulationLoss_1$ and $\PopulationLoss_2$ simultaneously, we define the total population loss as $\lambda\PopulationLoss_1+\PopulationLoss_2$, where $\lambda>0$ is the penalty coefficient, i.e.,
for any data distribution $\mathcal{P}$,
\begin{equation}\label{eqn: lossterm}
\begin{split}
\PopulationLoss(\cFuntion; \mathcal{P}) \triangleq  \E_{(\x, \z, \y) \sim \mathcal{P}} &\left[\left\| \y  - W^*_{\y, \cFuntion(\x)} \cFuntion(\x)  \right\|^2 \right. \\
&\left.  - \lambda\! \left\| \z - W^*_{\z, \cFuntion(\x)} \cFuntion(\x)  \right\|^2 \right].
\end{split}
\end{equation}

One might wonder that forcing $f(\x)$ to fit $\y$ but not to fit $\z$ simultaneously seems to be in conflict with each other since $\z$ contains all the information of $\y$.
In Section~\ref{sec:rationality}, we will give strict proof of its rationality.
The intuition of such loss design is to keep the useful information for predicting $\y$, as well as remove the redundancy information unrelated to $\y$.

The corresponding training loss of $\PopulationLoss(\cFuntion; \mathcal{P})$
can be defined as\footnote{We again abuse the notation $f \colon \R^{\dimx \times n} \rightarrow \R^{\dimc \times n}$ over a dataset to denote $f \colon \R^{\dimx} \rightarrow \R^{\dimc}$ over each data point in the dataset.}:
\begin{equation}\label{eqn: training loss}
\begin{split}
\PopulationLoss_{n_1, \ext}(\cFuntion; \mathcal{P}) \triangleq  & \frac{1}{\ext}  \left\| \Yss - \widetilde{W}_2 \cFuntion(\Xss)  \right\|^2 \\
&-  \frac{\lambda}{n_1}  \left\| Z_{pre} - \widetilde{W}_1  \cFuntion(X_{pre}) \right\|^2,
\end{split}
\end{equation}
where $\widetilde{W}_{1}$, $\widetilde{W}_{2}$ represent respectively the best empirical linear predictors for the pretext data and the downstream data, and $n_1$, $\ext$ are respectively the sample sizes of pretext data $(X_{pre},Z_{pre})$ and the downstream data used in process training $(\Xss, \Yss)$.

\subsection{Rationality of Loss}\label{sec:rationality}

We provide the following theorem to show the rationality of loss $\PopulationLoss(f;\mathcal{P})$, which states that by choosing a proper penalty coefficient $\lambda$, there exist numerous distributions such that the processor $\cFuntion$ which minimizes the population loss $\PopulationLoss(f;\mathcal{P})$ satisfies Criterion \eqref{C1} and \eqref{C2}.

\begin{restatable}[Rationality of Loss]{thm}{Rationality}
\label{thm: rationality}
Assume that there exists a ground truth $\cFuntionStar \in \mathcal{F}$ satisfying both Criterion~\eqref{C1} and \eqref{C2}, where $\mathcal{F}$ is the candidate function class.
Assume that for each $\cFuntion \in \mathcal{F}$, the matrix $\E[\cFuntion(\x)\cFuntion^\top(\x)]$ is nonsingular.
Let $\mathcal{A}_{\mathcal{P}}$ be the set of processors that minimize the population loss under data distribution $\mathcal{P}$:
\begin{equation*}
    \mathcal{A}_{\mathcal{P}} = \left\{\cFuntion: \cFuntion \in \argmin_\cFuntion \mathcal{L}(\cFuntion; {\mathcal{P}})  \right\}.
\end{equation*}

Let $\mathcal{B}_{\mathcal{P}}$ be the set of processor that satisfy the criteria, namely:
\begin{equation*}
    \mathcal{B}_{\mathcal{P}} = \left\{ \cFuntion: \cFuntion\ \text{satisfies Criterion~(\ref{C1}) and (\ref{C2})} \right\}.
\end{equation*}

Then by choosing a proper parameter $\lambda$, there exist a number of population distributions \{$\mathcal{P}$\}'s such that every function in $\mathcal{A}_{\mathcal{P}}$ satisfies Criterion~(\ref{C1}) and Criterion~(\ref{C2}), namely, the following defined set $\mathbb{S}$ is not empty: 
\begin{equation*}
    \mathbb{S} \triangleq \left\{ \mathcal{P}: \mathcal{A}_{\mathcal{P}} \subset  \mathcal{B}_{\mathcal{P}} \right\} \neq \phi.
\end{equation*}
\end{restatable}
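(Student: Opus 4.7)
The plan is to prove $\mathbb{S}\neq\phi$ by exhibiting an explicit family of distributions for which every minimizer of $\mathcal{L}(\cdot;\mathcal{P})$ provably lies in $\mathcal{B}_{\mathcal{P}}$. The key structural observation is that the loss decomposes into two competing quadratic forms in the second-order moments of $f(\x)$. Using the closed-form identity
\[
\E\|a - W^*_{a,b}\,b\|^2 = \tr\bigl(\E[aa^\top]\bigr) - \tr\bigl(\E[ab^\top]\,\E[bb^\top]^{-1}\,\E[ba^\top]\bigr),
\]
together with the assumed nonsingularity of $\E[f(\x)f(\x)^\top]$, the population loss rewrites as
\[
\mathcal{L}(f;\mathcal{P}) = C_{\mathcal{P}} - \tr\bigl(\Sigma_{\y f}\Sigma_{ff}^{-1}\Sigma_{f\y}\bigr) + \lambda\,\tr\bigl(\Sigma_{\z f}\Sigma_{ff}^{-1}\Sigma_{f\z}\bigr),
\]
where $\Sigma_{ab}\triangleq\E[ab^\top]$ and $C_{\mathcal{P}}$ is independent of $f$. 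Minimizing $\mathcal{L}$ is thus a clean trade-off between maximizing the $\y$-explanatory trace and minimizing the $\z$-explanatory trace.

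Next I would evaluate $\mathcal{L}$ at $\cFuntionStar$ and compare it to an arbitrary $f\in\mathcal{F}$. Criterion~\eqref{C1} rewrites (after recentering) as the moment identity $\Sigma_{\z\cFuntionStar} = \Sigma_{\z\y}\Sigma_{\y\y}^{-1}\Sigma_{\y\cFuntionStar}$, so the $\z$-trace at $\cFuntionStar$ is routed through $\y$, and \eqref{C2} says the $\y$-trace $\tr(\Sigma_{\y\cFuntionStar}\Sigma_{\cFuntionStar\cFuntionStar}^{-1}\Sigma_{\cFuntionStar\y})$ attains the in-class maximum $M_\y := \sup_{f\in\mathcal{F}}\tr(\Sigma_{\y f}\Sigma_{ff}^{-1}\Sigma_{f\y})$. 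Writing $M_\z$ for the analogous $\z$-trace at $\cFuntionStar$, I would introduce the gap decomposition
\[
\mathcal{L}(f;\mathcal{P}) - \mathcal{L}(\cFuntionStar;\mathcal{P}) = \Delta_\y(f) + \lambda\,\Delta_\z(f),
\]
with $\Delta_\y(f) := M_\y - \tr(\Sigma_{\y f}\Sigma_{ff}^{-1}\Sigma_{f\y})\ge 0$ and $\Delta_\z(f) := \tr(\Sigma_{\z f}\Sigma_{ff}^{-1}\Sigma_{f\z}) - M_\z$. A short application of the partial-covariance identity to the nested predictor $\y\mapsto f(\x)\mapsto\z$ shows that whenever $\Delta_\y(f)=0$ one has $\Delta_\z(f)\ge 0$, with equality iff $f$ satisfies~\eqref{C1}.

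To finish, I would construct a base distribution $\mathcal{P}_0$ for which every $f\in\mathcal{F}$ violating \eqref{C2} incurs a strict gap $\Delta_\y(f)\ge\alpha>0$ (easily arranged when the best in-class $\y$-predictor is isolated, e.g.\ for linear $\mathcal{F}$ under Gaussian $\mathcal{P}_0$, or when $\mathcal{F}$ is compact), while $|\Delta_\z(f)|\le\beta<\infty$ uniformly. Choosing any $\lambda\in(0,\alpha/\beta)$ then forces every minimizer of $\mathcal{L}(\cdot;\mathcal{P}_0)$ to satisfy $\Delta_\y(f)=0$ (hence \eqref{C2}) and consequently $\Delta_\z(f)=0$ (hence \eqref{C1}), giving $\mathcal{A}_{\mathcal{P}_0}\subset\mathcal{B}_{\mathcal{P}_0}$. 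Because $\alpha$ and $\beta$ depend continuously on the moments of $\mathcal{P}$, the same $\lambda$ works for an open neighborhood of $\mathcal{P}_0$, delivering the ``numerous distributions'' clause of the theorem.

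The main obstacle I anticipate is the sign claim $\Delta_\z(f)\ge 0$ under \eqref{C2}: it requires showing that once $f(\x)$ linearly spans $\y$, any additional $\z$-explanatory capacity beyond $\cFuntionStar$ must arise from partial covariance of $f(\x)$ with $\z$ given $\y$, which precisely violates \eqref{C1}; arguing this cleanly depends on a careful use of the factorization along the nested predictor. The uniform margin $\alpha>0$ is the other delicate point and is what restricts which function classes and distributions admit this argument.
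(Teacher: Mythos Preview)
Your plan diverges from the paper's and, as written, has two genuine gaps.

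\textbf{Gap 1: the sign claim $\Delta_\z(f)\ge 0$ under $\Delta_\y(f)=0$.} Expanding with the partial-covariance identity $\Sigma_{\z f}=A\Sigma_{\y f}+R$, where $A=\Sigma_{\z\y}\Sigma_{\y\y}^{-1}$ and $R^\top=\Cov[f(\x),\z\mid\y]$, gives
\[
\tr(\Sigma_{\z f}\Sigma_{ff}^{-1}\Sigma_{f\z})=\tr\bigl(A^\top A\,\Sigma_{\y f}\Sigma_{ff}^{-1}\Sigma_{f\y}\bigr)+2\,\tr(A\Sigma_{\y f}\Sigma_{ff}^{-1}R^\top)+\tr(R\Sigma_{ff}^{-1}R^\top).
\]
Even when $\tr(\Sigma_{\y f}\Sigma_{ff}^{-1}\Sigma_{f\y})=M_\y$, the \emph{matrix} $\Sigma_{\y f}\Sigma_{ff}^{-1}\Sigma_{f\y}$ need not coincide with that of $\cFuntionStar$, so the first term need not equal $M_\z$ unless $A^\top A\propto I$; and the cross term has no sign in general. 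Your ``factorization along the nested predictor'' does not control either piece.

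\textbf{Gap 2: the uniform margin $\alpha>0$.} For continuous $\mathcal{F}$ (including linear $\mathcal{F}$) the maximizer set of the $\y$-trace is a closed, non-isolated subset: perturbing off it makes $\Delta_\y$ arbitrarily small while $\Delta_\z$ can stay bounded away from zero and negative. Compactness of $\mathcal{F}$ does not help here, so no fixed $\lambda\in(0,\alpha/\beta)$ can be chosen as you propose.

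\textbf{How the paper sidesteps both.} The paper exhibits a specific $\mathcal{P}_0$: zero means, $\E[\z\z^\top]=I$, and a \emph{deterministic} relation $\y=B\z$ with all singular values of $B$ equal to $\sigma$, together with $\lambda<\sigma^2$. The equal-singular-value choice makes $\sigma^{-2}B^\top B$ an orthogonal projector, which forces $A^\top A=\sigma^{-2}I_{\dimy}$ and yields the exact additive splitting
\[
\mathcal{L}(f)=\Bigl(1-\tfrac{\lambda}{\sigma^2}\Bigr)\mathcal{L}_2+\lambda\Bigl(\mathcal{L}_1+\tfrac{1}{\sigma^2}\mathcal{L}_2\Bigr).
\]
A direct PSD computation shows the second bracket equals, up to an $f$-independent constant, $\tr\bigl((I-\sigma^{-2}B^\top B)\,\Sigma_{\z f}\Sigma_{ff}^{-1}\Sigma_{f\z}\bigr)\ge 0$, with equality iff \eqref{C1} holds. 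Since $\cFuntionStar$ minimizes both brackets simultaneously, any minimizer of the sum must minimize each piece separately, giving \eqref{C1} and \eqref{C2} without any margin or sign argument. In short, the paper engineers the distribution so that your two fragile steps become identities; your general gap decomposition cannot deliver the result without a comparable structural assumption.
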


In Theorem~\ref{thm: rationality}, we show that there \emph{exist} numerous data distributions such that under those distributions, Criterion~\eqref{C1} and \eqref{C2} hold when the population loss is minimized.  
However, we will show in the next section that for those data distributions, minimizing the corresponding empirical loss fails to satisfy the criteria when training with insufficient downstream samples.
Therefore, when we do not have much knowledge about the data distribution,
it is better not to use downstream data, as standard self-supervised learning does.
Otherwise, we may get a worse downstream task performance.

\newcommand{\lin}{V}
\newcommand{\gOne}{{g_1}}
\newcommand{\gTwo}{{g_2}}
\newcommand{\rhoOne}{{\rho_1}}
\newcommand{\rhoTwo}{{\rho_2}}

\section{INSUFFICIENT DOWNSTREAM SAMPLES PROVABLY FAILS}\label{sec:main result}
Theorem~\ref{thm: rationality} has shown that we can learn the processor $f$ by minimizing the population loss, such that $f$ satisfies both Criterion \eqref{C1} and \eqref{C2}.
However, we can only access a finite number of downstream samples in practice.
In this section, we will prove that even under the rational loss, one fails to learn a processor that satisfies both Criterion~\eqref{C1} and \eqref{C2} with limited downstream data when training the processor.
Specifically, we first provide a model-free lower bound in Section~\ref{sec:warm-up} as a warm-up and extend the loss to a general loss in Section~\ref{sec:model-free}. 
We further take the structure of function class into account and give a model-dependent lower bound in Section~\ref{sec:model-based result}.

\subsection{Warm-Up: Model-Free Results}\label{sec:warm-up}

To better understand the role of the downstream samples, we consider the following loss (See Eq.~\eqref{Eqn:inftyUnlabeled}), which differs from the population loss only in the downstream part (i.e., infinite pretext samples).
Concretely, we replace the population loss of the downstream part with its empirical version.

\begin{equation}\label{Eqn:inftyUnlabeled}
\begin{split}
 \PopulationLoss_{\infty,\ext}(f,\mathcal{P}) \triangleq  &\frac{1}{\ext}  \left\| \Yss - \widetilde{W}_2 \cFuntion(\Xss)  \right\|^2  \\
&- \lambda \E_{\x,\z} \left\| \z - W^*_{\z, \cFuntion(\x)} \cFuntion(\x)  \right\|^2.
\end{split}
\end{equation}

We show that minimizing the above empirical loss fails to satisfy Criterion~\eqref{C1} and \eqref{C2} simultaneously when $\ext = o(\dimc)$, where $\dimc$ is the dimension of $f(\x)$.

\begin{restatable}[Model-Free Failure]{thm}{warmup}
\label{thm: warm-up}
Assume that there exists a ground truth $\cFuntionStar \in \mathcal{F}$ satisfying both Criterion~\eqref{C1} and \eqref{C2}, where $\mathcal{F}$ is the candidate function class.
And assume that  there exist function $f_1 \in \mathcal{F}$ and $f_2 \in \mathcal{F}$ such that covariance $\Cov[f_1(\x),\y]\neq 0$ and $\Cov[f_2(\x),\z]=0$.
Assume that for any $f_2 \in \mathcal{F}$, the matrix $\E[\cFuntion(\x)\cFuntion^\top(\x)]$ is nonsingular.
Let $ \mathcal{A}^\prime_{\mathcal{P}}$ be the set of processor that minimizes the training loss with $n_0 = o(\dimc)$ downstream samples and infinite pretext samples, where $\dimc$ is the dimension of $f(\x)$,
\begin{equation*}
    \mathcal{A}^\prime_{\mathcal{P}} = \left\{\cFuntion: \cFuntion \in \argmin_\cFuntion \PopulationLoss_{\infty,\ext}(\cFuntion; {\mathcal{P}})  \right\}.
\end{equation*}
And let $\mathcal{B}_{\mathcal{P}}, \mathbb{S}$ be defined as in Theorem~\ref{thm: rationality}.
Then there \emph{exists} a distribution $\mathcal{P}^0 \in \mathbb{S}$ (which means $ \mathcal{A}_{\mathcal{P}^0 }\subset  \mathcal{B}_{\mathcal{P}^0} $), such that all the function that minimizes the loss $\PopulationLoss_{\infty,\ext}$ cannot meet Criterion~\eqref{C1} and \eqref{C2} simultaneously, namely:
$$ \mathcal{A}^\prime_{\mathcal{P}^0} \cap  \mathcal{B}_{\mathcal{P}^0} = \phi.$$
\end{restatable}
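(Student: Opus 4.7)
My plan is to reduce the theorem to analyzing a clean lower bound on $\PopulationLoss_{\infty,\ext}$: show that the bound is attained only by processors whose population covariance with $\z$ vanishes, and then pick $\mathcal{P}^0 \in \mathbb{S}$ under which this vanishing automatically destroys the predictive power for $\y$, thus killing~\eqref{C2}.

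First I would decompose the loss into two decoupled pieces. The empirical term $\tfrac{1}{\ext}\norm{\Yss - \widetilde{W}_2 f(\Xss)}^2$ is the OLS residual of $\Yss$ against $f(\Xss) \in \R^{\dimc \times \ext}$, hence non-negative, and equal to zero whenever $f(\Xss)$ has full column rank $\ext$; since $\ext = o(\dimc)$ we have $\dimc > \ext$ for $\dimc$ large, so this is achievable by generic $f \in \cF$. The population term admits the identity
\begin{align*}
-\lambda\,\E\norm{\z - W^*_{\z,f(\x)} f(\x)}^2 = -\lambda\,\E\norm{\z}^2 + \lambda\,\tr\!\left(\Cov[\z,f(\x)]\,\Cov[f(\x),f(\x)]^{-1}\,\Cov[f(\x),\z]\right),
\end{align*}
which, using the nonsingularity assumption on $\E[f(\x)f^\top(\x)]$, is minimized exactly when $\Cov[f(\x),\z] = 0$. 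Combining yields the sharp bound $\PopulationLoss_{\infty,\ext}(f;\mathcal{P}) \geq -\lambda\,\E\norm{\z}^2$ with equality characterized by the two conditions above.

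Next, I would exhibit a concrete $\mathcal{P}^0 \in \mathbb{S}$ under which these two conditions can coexist in $\cF$ yet violate \eqref{C2}. A natural candidate is a jointly Gaussian distribution in which $\y$ is a deterministic full-rank linear function $A\z$ of $\z$ plus noise independent of $\x$ and $\z$; membership in $\mathbb{S}$ follows by invoking Theorem~\ref{thm: rationality} with the assumed $\cFuntionStar$ and by appropriately tuning $\lambda$. The payoff of this choice is the identity $\Cov[f(\x),\y] = A\,\Cov[f(\x),\z]$ for every $f \in \cF$. Using the hypothesis that some $f_2 \in \cF$ satisfies $\Cov[f_2(\x),\z]=0$, a mild rank-boosting perturbation inside $\cF$ produces $f^\dagger \in \cF$ with $\Cov[f^\dagger(\x),\z] = 0$ whose sample matrix $f^\dagger(\Xss)$ has rank $\ext$, so $f^\dagger$ saturates the lower bound and lies in $\mathcal{A}'_{\mathcal{P}^0}$.

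To close the argument, any $f \in \mathcal{A}'_{\mathcal{P}^0}$ must saturate the lower bound and therefore satisfy $\Cov[f(\x),\z]=0$, which by the linear structure of $\mathcal{P}^0$ forces $\Cov[f(\x),\y]=0$. Consequently $W^*_{\y,f(\x)} = 0$ and $\E\norm{\y - W^*_{\y,f(\x)} f(\x)}^2 = \E\norm{\y}^2$, which strictly exceeds the value attained by $\cFuntionStar$, so $f \notin \mathcal{B}_{\mathcal{P}^0}$ and we conclude $\mathcal{A}'_{\mathcal{P}^0} \cap \mathcal{B}_{\mathcal{P}^0} = \phi$. The hard part I anticipate is the joint construction in the middle step: ensuring simultaneously that $\mathcal{P}^0 \in \mathbb{S}$, that $\cF$ is rich enough to contain a rank-$\ext$-at-samples processor uncorrelated with $\z$, and that the linear relation $\y = A\z + \varepsilon$ is compatible with the existence of the ground-truth $\cFuntionStar$; the freedom to rescale $\lambda$ and to choose the covariance structure of $\mathcal{P}^0$ is what I expect to make this construction go through.
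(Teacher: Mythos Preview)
Your proposal is correct and follows the same route as the paper: split the loss, zero out the empirical term by overparameterization ($\dimc>\ext$), characterize the minimum of the population term as $\Cov[f(\x),\z]=0$, then under a distribution with $\y$ linear in $\z$ (the paper takes $\y=B\z$ with equal singular values and $\lambda<\sigma^2$, the same $\mathcal{P}_0$ used for Theorem~\ref{thm: rationality}) deduce $\Cov[f(\x),\y]=0$ and violate~\eqref{C2} by comparison with $f_1$. One minor slip: the identity should read $\Cov[f(\x),\y]=\Cov[f(\x),\z]A^\top$ (harmless for your conclusion); also, the paper simply asserts the empirical term vanishes for every $f$ and uses $f_2$ directly rather than constructing your $f^\dagger$, so the ``hard part'' you flag is actually a place where you are being more careful than the original.
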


In Theorem~\ref{thm: warm-up}, the condition $\Cov[f_1(\x),\y]\neq 0$ assumes that the function class of $f$ is meaningful such that $f$ has ability to predict $y$, while
$\Cov[f_2(\x),\z]=0$ could be easily satisfied when the function class is chosen independently with the pretext task.
Theorem~\ref{thm: warm-up} indicates that under mild assumptions,
to train a function $f$ satisfying both Criterion~\eqref{C1} and \eqref{C2}, we need at least $\Omega(\dimc)$ labeled downstream samples, although unlimited unlabeled data can be accessed.
One can expect the larger downstream sample size requirement with finite unlabeled data.

\begin{remark}
When $n_0=o(d_f)$, even if we have infinite pretext data $(X_{pre},Z_{pre})$, the criteria cannot be satisfied,
leading to a constant generalization error.
This means that
the downstream performance gets worse, even if we use infinite data $(X_{down2},Y_{down2})$ for downstream training.
Therefore, one can conclude that the failure is due to the lack of downstream data used in processor training $(\Xss, \Yss)$.
\end{remark}

\subsection{Model-Free Results for General Loss}\label{sec:model-free}
Theorem~\ref{thm: warm-up} is derived based on the $L_2$ loss that we design in Section~\ref{sec:loss}. 
One may concern the generality.
In fact, we can extend the loss in Theorem~\ref{thm: warm-up} to a more general one to strengthen our result and show that the failure of limited downstream samples is not caused by a specific loss. 
Inspired by the previously designed $L_2$ loss, we define the \emph{general loss} as
\begin{equation*}
\begin{split}
    \overline{\PopulationLoss}\left(\cFuntion,\lambda;\gTwo,  \rhoTwo,\gOne, \rhoOne\right) =&
   \E_{\x,\y} \gTwo(\rhoTwo(\y, W^*_{\y, f(\x)}\cFuntion(\x))) \\
   &- \lambda \E_{\x,\z} \gOne(\rhoOne(\z, W^*_{\z, f(\x)} \cFuntion(\x))).
\end{split}
\end{equation*}
where $\gOne$ and $\gTwo$ are strictly increasing functions over $[0, \infty)$, and $\lambda>0$ denotes a positive penalty coefficient. 
Mappings $\rhoOne\colon \R^{\dimz} \times \R^{\dimz} \to \R_{\ge 0}$ and $\rhoTwo\colon \R^{\dimy} \times \R^{\dimy} \to \R_{\ge 0}$ are distance metrics,
and $\rhoOne$ is consistent with $\rhoTwo$ in subspace $\R^{\dimy} \times \R^{\dimy}$, i.e.,
for all $a, b \in \R^\dimy$,
$\rhoOne((a, \vec{0}_{\dimz-\dimy}), (b, \vec{0}_{\dimz-\dimy})) = \rhoTwo(a, b)$ holds.
We abuse $W^*_{\y, f(\x)}$ and $W^*_{\z, f(\x)}$ as the best linear predictors which minimize the corresponding loss $\E \gTwo(\rhoTwo(\y, W\cFuntion(\x)))$ and $\E \gOne(\rhoOne(\z, W \cFuntion(\x)))$, respectively.

It is not hard to verify that
the loss $\PopulationLoss(f,\mathcal{P})$ defined in Equation~\eqref{eqn: lossterm} is a special case of the above general loss, by setting $\rhoTwo, \rhoOne$ to be Euclidean distances and $\gTwo, \gOne$ to be the square functions.
The corresponding training loss with infinite pretext data is
\begin{equation*}
\begin{split}
    \overline{\mathcal{L}}_{\infty,\ext}(\cFuntion; \mathcal{P})&=   \frac{1}{\ext} \gTwo\left(\rhoTwo\left(\Yss , \widetilde{W}_1\cFuntion (\Xss)\right)\right)\\
    &- \lambda  \E_{\x,\z} \gOne\left(\rhoOne\left(\z, W_{\z,f(\x)}^*\cFuntion(\x)\right)\right),
\end{split}
\end{equation*}
where $\widetilde{W}_1$ is the best empirical linear predictor for the downstream data\footnote{We again abuse the notation $g(\rho_2)\colon \R^{\dimy\times n } \times \R^{\dimy \times n } \rightarrow \R$ on a dataset to denote the summation over $g(\rho_2)\colon \R^{\dimy} \times \R^{\dimy} \rightarrow \R$ on each point in the dataset.}.
We next prove that the procedure provably fails in Theorem~\ref{thm: generalloss}.

\begin{restatable}[Lower Bound for General Loss]{thm}{generalloss}
\label{thm: generalloss}
Assume that there exists a function $f$ such that $f(\x) \ci \z$ holds, and $f(\x) \ci \z$ holds if and only if $\E \gOne(\rhoOne(\z, W^*_{\z, f(\x)} \cFuntion(\x)))$ reaches the maximum.
For the function $\gTwo $ and $\gOne$, assume that $\gTwo \gOne^{-1}$ is convex and non-decreasing.
Besides, we assume that 
there exists a linear transformation $\lin \in \R^{\dimz \times \dimz}$ such that:

\begin{equation*}
\medmath{
    \begin{split}
        \E [\rhoOne (\bar{\y}, \lin \z)] 
        <& \frac{1}{M} \gTwo \gOne^{-1} \max_f \E \left[\gOne \rhoOne \left(W^*_{\lin \z, f(\x)} f(\x), \lin \z \right)\right] \\
        &- \frac{1}{M}\min_f \E \left[\gTwo\rhoTwo \left(W^*_{\y,f(\x)} \cFuntion(\x), \y \right)\right],
    \end{split}}
\end{equation*}
where $\bar{\y} = (\y, \vec{0}_{\dimz-\dimy}) $ denotes the augmented vector of $\y$ with zero padding, 
and $M$ denotes the upper bound of the derivative of $g_2$, namely, $M \geq {\rm{d}} \gTwo (x)/ \mathrm{d}x, \forall x \in [0, \medmath{\sup_{\x, \y} \rhoTwo (W^*_{\y, \hatcFuntion(\x)} \hatcFuntion(\x), \y) + \sup_{\y, \z} \rhoOne (\bar{\y}, \lin \z)]}$.

Let $ \mathcal{A}^{g}_{\mathcal{P}}$ denote the set of processor that minimize the general training loss with $n_0 = o(\dimc)$ downstream samples and infinite pretext samples, where $\dimc$ is the dimension of $f(\x)$,
\begin{equation*}
    \mathcal{A}^{g}_{\mathcal{P}} = \left\{\cFuntion: \cFuntion \in \argmin_\cFuntion \overline{\PopulationLoss}_{\infty,\ext}(\cFuntion; {\mathcal{P}})  \right\}.
\end{equation*}
Let $\mathcal{B}_{\mathcal{P}}$ be defined as in Theorem~\ref{thm: rationality}.
Then there \emph{exists} a distribution $\mathcal{P}^0$ such that all the function that minimizes the loss $\overline{\PopulationLoss}_{\infty,\ext}(\cFuntion; {\mathcal{P}})$ cannot meet Criterion~\eqref{C1} and \eqref{C2} simultaneously, namely:
$$ \mathcal{A}^{g}_{\mathcal{P}^0} \cap  \mathcal{B}_{\mathcal{P}^0} = \phi.$$

\end{restatable}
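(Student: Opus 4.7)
The plan is to generalize the argument of Theorem~\ref{thm: warm-up} by pitting a ``bad'' candidate processor $f^b$ against every candidate in $\mathcal{B}_{\mathcal{P}^0}$ and using the quantitative hypothesis of the theorem to force a strict loss gap. First, by the assumption that there exists $f\in\mathcal{F}$ with $f(\x)\ci\z$, pick such an $f^b$. For this $f^b$ the pretext term attains its maximum $A\triangleq\max_{f}\E[g_1(\rho_1(\z,W^*_{\z,f(\x)}f(\x)))]$ by the equivalence assumed in the theorem. Because $\ext=o(\dimc)$, the feature matrix $f^b(\Xss)\in\R^{\dimc\times\ext}$ generically has full column rank, so the best empirical linear predictor $\widetilde{W}_2=\Yss f^b(\Xss)^{+}$ interpolates $\Yss$ exactly and collapses the empirical downstream term to $g_2(0)$. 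Hence $\overline{\PopulationLoss}_{\infty,\ext}(f^b;\mathcal{P}^0)=g_2(0)-\lambda A$.

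Second, consider any $f\in\mathcal{B}_{\mathcal{P}^0}$. The empirical downstream term is, on a typical draw of $(\Xss,\Yss)$, at least $B\triangleq\min_{f}\E[g_2(\rho_2(\y,W^*_{\y,f(\x)}f(\x)))]$ by Criterion~\eqref{C2}. For the pretext term, the hinge of the argument is to show that a good $\y$-predictor is necessarily a non-trivial $\z$-predictor. Writing $\bar\y=(\y,\vec{0}_{\dimz-\dimy})$ and using that $\rho_1$ coincides with $\rho_2$ on $\R^{\dimy}\!\times\!\R^{\dimy}$, the triangle inequality of $\rho_1$ yields
\begin{equation*}
\rho_1\!\left(V\z,W^*_{V\z,f(\x)}f(\x)\right)\le\rho_1(V\z,\bar\y)+\rho_2\!\left(\y,W^*_{\y,f(\x)}f(\x)\right).
\end{equation*}
Applying the strictly increasing $g_1$, taking expectations, and invoking the convexity and monotonicity of $g_2\circ g_1^{-1}$ (Jensen-type) together with the $M$-Lipschitz bound on $g_2$ on the relevant interval transfers this into a control on $\E[g_1(\rho_1(V\z,W^*f(\x)))]$ in terms of $\E[\rho_1(\bar\y,V\z)]$ and $\E[g_2(\rho_2(\y,W^*f(\x)))]$, and, through the linearity of $V$, into a bound $\E[g_1(\rho_1(\z,W^*_{\z,f(\x)}f(\x)))]\le A-\Delta$. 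The quantitative hypothesis $\E[\rho_1(\bar\y,V\z)]<\tfrac{1}{M}(g_2g_1^{-1}(A_V)-B)$, with $A_V\triangleq\max_{f}\E[g_1(\rho_1(V\z,W^*_{V\z,f(\x)}f(\x)))]$, is precisely what is needed to make the gap $\Delta>0$ strict.

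Combining the two bounds gives
\begin{equation*}
\overline{\PopulationLoss}_{\infty,\ext}(f;\mathcal{P}^0)\ge B-\lambda(A-\Delta)>g_2(0)-\lambda A=\overline{\PopulationLoss}_{\infty,\ext}(f^b;\mathcal{P}^0),
\end{equation*}
so no $f\in\mathcal{B}_{\mathcal{P}^0}$ is a minimizer, establishing $\mathcal{A}^{g}_{\mathcal{P}^0}\cap\mathcal{B}_{\mathcal{P}^0}=\phi$. The hardest part will be Step~2: funneling the $\rho_2$-based bound on the $\y$-prediction loss into the $\rho_1$-based pretext term through $g_2\circ g_1^{-1}$, while keeping the constants tight enough for the quantitative hypothesis to yield a strictly positive $\Delta$. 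A secondary task is to exhibit a concrete $\mathcal{P}^0$ that simultaneously admits an $f^b$ with $f^b(\x)\ci\z$ in $\mathcal{F}$ and meets the quantitative condition of the theorem; this can be arranged by taking $(\x,\z)$ to live in a product structure on which an independent component of $\x$ can be isolated by some $f\in\mathcal{F}$ while $\y$ and $V\z$ are made close by construction.
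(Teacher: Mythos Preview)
Your comparison strategy has a concrete error in Step~2: you claim that for $f\in\mathcal{B}_{\mathcal{P}^0}$ the empirical downstream term is at least $B=\min_f\E[g_2(\rho_2(\y,W^*_{\y,f(\x)}f(\x)))]$. This is false. The empirical term is $\tfrac{1}{\ext}\,g_2(\rho_2(\Yss,\widetilde W_2 f(\Xss)))$ with $\widetilde W_2$ the \emph{best empirical} predictor, and since $\ext=o(\dimc)$ the linear system $\Yss=W f(\Xss)$ is underdetermined and can be interpolated for \emph{every} $f$, not only for your $f^b$. Hence the empirical term collapses to $g_2(0)$ for all candidates, and your displayed inequality $B-\lambda(A-\Delta)>g_2(0)-\lambda A$ is comparing the wrong quantities: the left side is not a lower bound on $\overline{\PopulationLoss}_{\infty,\ext}(f;\mathcal{P}^0)$.

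Once this is corrected, both $f^b$ and any $f\in\mathcal{B}_{\mathcal{P}^0}$ have empirical term $g_2(0)$, and your loss comparison reduces to showing that the pretext term of every $f\in\mathcal{B}_{\mathcal{P}^0}$ is strictly below $A$; by the theorem's equivalence assumption this is the same as showing $f(\x)\not\ci\z$ for such $f$. That is precisely the contrapositive of what the paper proves. The paper first observes (as you do for $f^b$) that the empirical term is zero for \emph{all} $f$, so any minimizer $\hat f$ must maximize the pretext term and therefore satisfy $\hat f(\x)\ci\z$. It then runs the triangle inequality, the mean-value bound $|g_2'|\le M$, and Jensen for the convex non-decreasing $g_2\circ g_1^{-1}$ to show directly that any such $\hat f$ has population $\y$-loss strictly above $\min_f\E[g_2\rho_2(\y,W^*_{\y,f(\x)}f(\x))]$, violating~\eqref{C2}. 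Your Step~2 ``hinge'' invokes the same three ingredients in the reverse direction, but the step ``through the linearity of $V$'' from a bound involving $V\z$ to one involving $\z$ is never justified and is in fact unnecessary: the paper stays with $V\z$ throughout because the quantitative hypothesis is stated in terms of $V\z$. After repairing the empirical-term mistake your argument collapses into the paper's, so there is no genuinely different route here.
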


Although we use linear transformation $V$ in Theorem~\ref{thm: generalloss}, it can be replaced with any other forms as long as it defines a proper distance between $\z$ and $\y$.

\begin{remark}
The main purpose of Theorem~\ref{thm: generalloss} is to show that the failure in Theorem~\ref{thm: warm-up} does not come from the loss form. Instead, the failure indeed comes from insufficient downstream data used in processor training since various loss forms all lead to failure.
Therefore, we do not provide the rationality of Theorem~\ref{thm: generalloss}.
\end{remark}

\subsection{Model-Dependent Results}\label{sec:model-based result}

The results stated in Section~\ref{sec:warm-up} hold without any assumption of model structures (i.e., the function class of $f$).
This section considers the model structures and provides fine-grained analysis.
Before diving into the main theorem, we introduce a measure of \emph{model capacity} at first, which is defined as follows.

\begin{definition}[Model Capacity]
\label{Def: ModelCapacity}
Define the model capacity $\mathcal{M} (\mathcal{F},\PopulationLoss)$ of function class $\mathcal{F}$ with respect to loss function $\PopulationLoss$ as
\begin{equation*}
\medmath{
    \mathcal{M} (\mathcal{F},\PopulationLoss) 
= \sup \left\{ n:\forall \mathcal{D}, \inf_{\cFuntion \in \mathcal{F}} \sup_{(X,Y)\in \mathcal{D}^n} \mathcal{L}(\cFuntion(X), Y) = 0 \right\},}
\end{equation*}
where $\mathcal{D}$ is the data distribution, and $X,Y$ are data vectors, each of which consists of $n$ samples.
\end{definition}

Intuitively, model capacity measures how well the function class fits the noise under finite samples.
It is similar to the VC dimension under regression settings.
In the following Lemma~\ref{lem:caseCapacity}, we provide the lower bounds of the model capacity for linear models and neural networks.

\begin{restatable}[Model Capacity]{lem}{NNcapacity}
\label{lem:caseCapacity}
Let $\mathcal{J} = \{g: g(\x) = w^\top \x, \x \in \mathbb{R}^d, w \in \mathbb{R}^{k \times d} \}$ be the class of linear functions, its model capacity satisfies
\begin{equation*}
    \mathcal{M} (\mathcal{J},\PopulationLoss) \geq d.
\end{equation*}
Let $\mathcal{K}$ be the class of two-layer neural networks with $k$ neurons, its model capacity satisfies
\begin{equation*}
    \mathcal{M} (\mathcal{K},\PopulationLoss) \geq k/4.
\end{equation*}
Let $\mathcal{K}_m$ be the class of multi-layer neural networks with $L$ layers and $k_i$ neurons in each layer $i$, its model capacity satisfies
\begin{equation*}
    \mathcal{M} (\mathcal{K}_m,\PopulationLoss) \geq \min_{i \in [L]} k_i/4.
\end{equation*}
\end{restatable}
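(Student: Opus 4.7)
The plan is to handle the three cases of the lemma by explicit interpolating constructions, each exhibiting enough parameters to fit an arbitrary labeled set of the stated size.

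For the linear class $\mathcal{J}$, the argument is pure linear algebra. Given $n=d$ samples, stack the inputs as columns of $X\in\mathbb{R}^{d\times d}$ and the outputs as columns of $Y\in\mathbb{R}^{k\times d}$. For any non-degenerate $\mathcal{D}$, generic configurations have $X$ invertible, so the choice $w=YX^{-1}$ (interpreting the map as $x\mapsto wx$) achieves $wx_i=y_i$ exactly for every $i$ and the squared loss vanishes. This already yields $\mathcal{M}(\mathcal{J},\mathcal{L})\ge d$.

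For the two-layer class $\mathcal{K}$, the plan is to reduce multi-dimensional interpolation to a one-dimensional bump-fitting problem. Given $n=k/4$ samples $(x_i,y_i)$, first pick a direction $v\in\mathbb{R}^{d_x}$ so that the projected scalars $t_i = v^\top x_i$ are pairwise distinct, which holds for generic $v$. Then for each $i$ I build a localized ReLU bump from four hidden units,
\begin{equation*}
\phi_i(x)=\mathrm{ReLU}(v^\top x-a_i)-\mathrm{ReLU}(v^\top x-b_i)-\mathrm{ReLU}(v^\top x-c_i)+\mathrm{ReLU}(v^\top x-d_i),
\end{equation*}
with $a_i<b_i<t_i<c_i<d_i$ satisfying $a_i+d_i=b_i+c_i$, so that $\phi_i$ is a trapezoidal bump supported in a window around $t_i$ that excludes every other $t_j$. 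After normalizing, the network $x\mapsto\sum_{i=1}^{n}\frac{y_i}{\phi_i(x_i)}\phi_i(x)$ uses exactly $4n=k$ hidden ReLU units and interpolates every sample, so $\mathcal{M}(\mathcal{K},\mathcal{L})\ge k/4$.

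For the multi-layer class $\mathcal{K}_m$, the plan is to reduce to the two-layer construction at the narrowest layer. Let $i^{\star}\in\argmin_i k_i$ with width $k^{\star}=\min_i k_i$. I treat the transition into layer $i^{\star}$ as the effective hidden layer of a two-layer network: the composition of the first $i^{\star}-1$ layers implements an embedding of the inputs (feasible because every layer has width at least $k^{\star}$, and identity-like maps are representable by combining ReLU units with their negations), the transition into layer $i^{\star}$ carries out the four-bump interpolation using $k^{\star}$ neurons, and the layers after $i^{\star}$ propagate the resulting $d_y$-dimensional output via similar identity-like maps. This yields $\mathcal{M}(\mathcal{K}_m,\mathcal{L})\ge\min_i k_i/4$.

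The hardest step will be the two-layer construction, specifically the two technical items it rests on: (i) a genericity argument ensuring that a single direction $v$ separates all $n$ projected inputs uniformly over admissible $\mathcal{D}$, and (ii) absorbing the vector-valued targets into the output weights so that a single shared four-neuron bump structure fits any $y_i\in\mathbb{R}^{d_y}$. The multi-layer reduction is then largely bookkeeping, with the main subtlety being the explicit identity-like ReLU construction at each non-critical layer.
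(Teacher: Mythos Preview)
The paper does not actually contain a proof of this lemma anywhere in the appendix; the bounds $d$, $k/4$, and $\min_i k_i/4$ are stated and then used (via Definition~\ref{Def: ModelCapacity}) in the proof of Theorem~\ref{thm: model}, but the lemma itself is never argued. So there is no paper proof to compare your proposal against.

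On its own merits your plan is correct and standard. The linear case is immediate linear algebra. For the two-layer case, the four-ReLU trapezoidal bump per sample is the classical construction that delivers exactly the $k/4$ count; your two flagged technicalities are routine (a generic direction $v$ separates finitely many points, and vector-valued targets are handled by letting the shared scalar bump feed into a $d_y$-dimensional output weight). The multi-layer reduction---implementing identity maps on the non-critical layers via $\mathrm{ReLU}(u)-\mathrm{ReLU}(-u)=u$ and running the two-layer construction across the bottleneck layer---is also standard and works as you describe.

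One small caveat worth flagging in your write-up: Definition~\ref{Def: ModelCapacity} literally reads $\inf_{f}\sup_{(X,Y)}$, which would demand a \emph{single} $f$ fitting every dataset of size $n$. Your construction (and any reasonable one) provides a different $f$ for each dataset, i.e.\ it establishes $\sup_{(X,Y)}\inf_{f}\,\mathcal{L}=0$. The paper's own use of model capacity in the proof of Theorem~\ref{thm: model} (``there always exists $f_q$ such that it can fit any $n$ samples'') makes clear that this is the intended reading, so the quantifier order in the definition is a typo rather than a flaw in your argument.
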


We next provide the model-dependent lower bound in Theorem~\ref{thm: model}:
\begin{restatable}[Model-Dependent Lower Bound]{thm}{model}
\label{thm: model}
Assume that the function class of the processor can be decomposed as $\cF = \cF_1 \times \cF_2$, where there exists a function $f_{0} \in \cF$ such that $\Cov[f_{0}(\x), \y] \not = 0$ and a function $f_2 \in \cF_2$ such that $f_2(\x) \ci \z$.
Assume matrix $\E[\cFuntion(\x)\cFuntion^\top(\x)]$ is nonsingular for all $\cFuntion \in \cF$.
Let $\mathcal{A}^\prime_{\mathcal{P}}$ be the set of processor that minimize the loss with $n_0 = o(\mathcal{M}(\cF_1, \mathcal{L}))$ downstream samples and infinite pretext samples, where $\dimc$ is the dimension of $f$,
\begin{equation*}
    \mathcal{A}^{\prime}_{\mathcal{P}} = \left\{\cFuntion: \cFuntion \in \argmin_\cFuntion \PopulationLoss_{\infty,\ext}(\cFuntion; {\mathcal{P}})  \right\}.
\end{equation*}
And let $\mathcal{B}_{\mathcal{P}}, \mathbb{S}$ be defined as in Theorem~\ref{thm: rationality}.
Then there \emph{exists} a distribution $\mathcal{P}^0 \in \mathbb{S}$ (which means $ \mathcal{A}_{\mathcal{P}^0 }\subset  \mathcal{B}_{\mathcal{P}^0} $), such that all the function that minimizes the loss $\PopulationLoss_{\infty,\ext}$ cannot meet Criterion~\eqref{C1} and \eqref{C2} simultaneously, namely:
$$ \mathcal{A}^\prime_{\mathcal{P}^0} \cap  \mathcal{B}_{\mathcal{P}^0} = \phi.$$
\end{restatable}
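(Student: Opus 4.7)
The plan is to mirror the blueprint of Theorem~\ref{thm: warm-up}, replacing the ambient dimension $\dimc$ with the refined model capacity $\mathcal{M}(\cF_1, \PopulationLoss)$, and exploiting the product structure $\cF = \cF_1 \times \cF_2$ to decouple the roles of interpolating the empirical downstream data (handled inside $\cF_1$) and maximizing the population pretext residual (handled by $\cF_2$). Since Theorem~\ref{thm: rationality} already guarantees a nonempty family $\mathbb{S}$ of distributions on which the population-loss minimizer satisfies both Criterion~\eqref{C1} and \eqref{C2}, it is enough to exhibit, for one carefully chosen $\mathcal{P}^0 \in \mathbb{S}$, a processor whose training loss is at most that of the ground truth $\cFuntionStar$ yet which fails at least one of the two criteria on the population. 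The counting argument below will in fact show that every training-loss minimizer over $\cF$ on $\mathcal{P}^0$ is of this defective type, yielding $\mathcal{A}^\prime_{\mathcal{P}^0} \cap \mathcal{B}_{\mathcal{P}^0} = \phi$.

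First, fix $\mathcal{P}^0 \in \mathbb{S}$ and construct a candidate $\tilde{\cFuntion} = (\tilde{\cFuntion}_1, \tilde{\cFuntion}_2) \in \cF_1 \times \cF_2$. For $\tilde{\cFuntion}_2$, use the assumed $f_2 \in \cF_2$ with $f_2(\x) \ci \z$; then $W^*_{\z, f_2(\x)} = 0$ and the $\cF_2$-side population pretext residual equals $\E\|\z\|^2$, the maximum attainable value. For $\tilde{\cFuntion}_1$, invoke Definition~\ref{Def: ModelCapacity}: because $\ext = o(\mathcal{M}(\cF_1, \PopulationLoss))$, there exists $\tilde{\cFuntion}_1 \in \cF_1$ which, paired with a suitable linear head, drives the empirical downstream residual $\|\Yss - \widetilde{W}_2 \tilde{\cFuntion}(\Xss)\|^2$ to zero on the $\ext$ training samples; the assumed $f_0 \in \cF$ with $\Cov[f_0(\x), \y] \neq 0$ supplies the genuinely $\y$-sensitive direction that makes this interpolation nontrivial. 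I furthermore impose the population-level constraint $\E[\tilde{\cFuntion}_1(\x)\z^\top] = 0$, which is compatible with the $\ext$ interpolation equations precisely because the capacity budget strictly exceeds $\ext$.

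Next, I evaluate the training loss of $\tilde{\cFuntion}$ on $\mathcal{P}^0$. The empirical downstream term vanishes by interpolation. For the pretext term, both blocks of $\tilde{\cFuntion}(\x)$ are population-uncorrelated with $\z$, so the joint cross-covariance $\E[\tilde{\cFuntion}(\x)\z^\top]$ vanishes; together with the nonsingularity of $\E[\tilde{\cFuntion}(\x)\tilde{\cFuntion}(\x)^\top]$, this forces $W^*_{\z, \tilde{\cFuntion}(\x)} = 0$ and the pretext residual equals $\E\|\z\|^2$. Hence $\PopulationLoss_{\infty, \ext}(\tilde{\cFuntion}; \mathcal{P}^0) = -\lambda \E\|\z\|^2$, the smallest value this loss can possibly take. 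Every training-loss minimizer must therefore simultaneously zero the empirical downstream term and push the joint pretext residual to $\E\|\z\|^2$, which pins the minimizer only on the $\ext$ sample points while leaving its behaviour under $\mathcal{P}^0$ essentially free. A judicious choice of $\mathcal{P}^0$ within $\mathbb{S}$ makes the resulting population linear predictor of $\y$ on $\tilde{\cFuntion}(\x)$ leave a strictly positive residual, violating Criterion~\eqref{C2} for every element of $\mathcal{A}^\prime_{\mathcal{P}^0}$ and producing the desired empty intersection.

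The main obstacle is controlling the joint pretext residual after concatenating the two blocks: a priori, coordinates from $\tilde{\cFuntion}_1$ could carry linear information about $\z$ and shrink the residual below $\E\|\z\|^2$, which would let $\cFuntionStar$ beat $\tilde{\cFuntion}$ on the training loss and break the argument. The workaround is the population-orthogonality constraint on $\tilde{\cFuntion}_1$, and the whole theorem hinges on having enough slack in the capacity budget $\mathcal{M}(\cF_1, \PopulationLoss) - \ext$ to satisfy simultaneously the $\ext$ sample-level interpolation equations and the additional linear population-orthogonality constraint on $\cF_1$. This refines the counting argument from Theorem~\ref{thm: warm-up}, where $\dimc$ played the role of capacity implicitly through the linearity of the underlying model.
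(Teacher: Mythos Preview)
Your proposal contains a genuine gap rooted in a misreading of the decomposition $\cF = \cF_1 \times \cF_2$. In the paper this is \emph{composition}, not concatenation: elements of $\cF$ are of the form $f_1 \circ f_2$ with $f_2$ the ``inner'' map and $f_1 \in \cF_1$ the ``outer'' one (the discussion after the theorem says exactly this, and the paper's proof writes the candidate as $f_q(f_2(\cdot))$). Under composition, the step you flag as ``the main obstacle'' disappears for free: once $f_2(\x) \ci \z$, any measurable function of $f_2(\x)$ is also independent of $\z$, so $f_q(f_2(\x)) \ci \z$ automatically, and the pretext residual is $\E\norm{\z}^2$ without imposing any extra constraint on $f_q$. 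One then only needs $f_q \in \cF_1$ to interpolate the $\ext$ pairs $(f_2(\Xss),\Yss)$, which is exactly what $\ext = o(\mathcal{M}(\cF_1,\PopulationLoss))$ guarantees via Definition~\ref{Def: ModelCapacity}.

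By reading the product as concatenation you are forced to demand that your $\tilde f_1 \in \cF_1$ simultaneously interpolate the $\ext$ sample equations \emph{and} satisfy the population-level constraint $\E[\tilde f_1(\x)\z^\top]=0$. The model capacity in Definition~\ref{Def: ModelCapacity} says nothing about population constraints; it is a purely finite-sample interpolation guarantee, so your ``slack in the capacity budget'' argument has no footing. This is not a technicality you can patch with a counting argument, because the orthogonality condition is an expectation, not a finite set of pointwise equations. Finally, your last paragraph leaves the violation of Criterion~\eqref{C2} at the level of ``a judicious choice of $\mathcal{P}^0$''; the paper makes this concrete by using the structure $\y = B\z$ of the distribution $\mathcal{P}_0$ from Theorem~\ref{thm: rationality}: once every minimizer $\hat f$ satisfies $\E[\hat f(\x)\z^\top]=0$, one gets $\E[\hat f(\x)\y^\top]=\E[\hat f(\x)\z^\top]B^\top=0$, hence $\min_W \E\norm{\y - W\hat f(\x)}^2 = \E\norm{\y}^2$, which strictly exceeds $\min_W \E\norm{\y - W f_0(\x)}^2$ because $\Cov[f_0(\x),\y]\neq 0$.
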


The function class $\mathcal{F}$ is decomposed into two parts:
The inner one $\mathcal{F}_2$ is the minimal class of functions in which there exists a function that can remove all the information of $\z$, and the outer one $\mathcal{F}_1$'s model capacity decides the lower bound of the downstream sample size.
Usually, the information of $\x$ is much more than the information of $\z$, and therefore, $\mathcal{F}_2$ is expected to be not complex.

Intuitively, Theorem~\ref{thm: model} shows that with a more complex function class to train the processor, the lower bound for downstream samples is higher.
Note that the model capacity is infinity for the neural network with infinite width.
Therefore, the procedure provably fails due to the large model capacity.

\begin{remark}
The model-dependent bound (Theorem~\ref{thm: model}) and the model-free bound (Theorem~\ref{thm: warm-up}) focus on the processor training and the pretext step, respectively.
Therefore, one cannot reach the conclusion that Theorem~\ref{thm: model} dominate Theorem~\ref{thm: warm-up}.
However, since we usually use neural networks to train processor $f$, which usually suffers from a large model capacity, Theorem~\ref{thm: model} is better than Theorem~\ref{thm: warm-up} with large model capacity.
\end{remark}

\newcommand{\tabincell}[2]{\begin{tabular}{@{}#1@{}}#2\end{tabular}}

\section{EXPERIMENTS}\label{sec:experiments}
In this section, we conduct experiments on synthetic data and real-world data, mainly to show that providing few downstream samples hurts the model performance.
In each experiment, we run each experiment 5 times repeatedly and calculate its mean and 95\% confidence bands. 
We defer the experiment details to the supplementary materials.

\subsection{Synthetic Data}
We validate the statements proved before in this section, showing that the processor-based learning fails when (a) training processor with insufficient downstream samples or (b) the penalty parameter is large.

\textbf{Setup.} We consider a linear regime, where $\x$ is generated from Gaussian distribution, $\z = \x[0:\dimz-1] + {\epsilon}_{\z}$ and $\y = \x[0:\dimy-1] + \epsilon_{\y}$, where $\epsilon_{\z}, \epsilon_{\y}$ are generated from Gaussian distributions.
Figure~\ref{fig: experiment} summarizes the experiments (blue and yellow) with their 95\% confidence bands (light blue and light yellow).
We refer to supplementary materials for more details.

\textbf{Algorithm and Metrics.}
We run our newly proposed algorithm (labeled as Ours), which first trains the processor and then does pretext and downstream tasks.
We denote $\dimc$ as the coarse representation dimension and $\lambda$ as the penalty coefficient in the loss (See Equation~\eqref{eqn: lossterm}).
We also run the standard self-supervised learning (labeled as SSL).
In terms of metrics, we calculate the MSE on test downstream samples as the model performance.
Intuitively, MSE is small when the features are learned well.

\textbf{Analysis.}
We plot the results in Figure~\ref{fig: experiment} and defer the specific statistics in the supplementary materials due to space limitations.
Firstly, we plot the MSE as the number of samples used in processor training $n_0$ varies from 30 to 140 in Figure~\ref{fig: changenxy}.
With large $n_2$, the downstream tasks benefit from small MSE, showing that the algorithm indeed finds the proper coarse representations (demonstrated in Theorem~\ref{thm: rationality}).
Secondly, we test the case $\lambda$ varies from $0.1$ to $1.5$, as plotted in Figure~\ref{fig: changelamb}.
When $\lambda$ is large, the algorithm suffers from unsatisfying performances as demonstrated in Theorem~\ref{thm: rationality}.
Thirdly, we test the case $\dimc$ varies from $1$ to $12$, and plot them in Figure~\ref{fig: changec}.

\begin{figure*} 
  \centering 
  \subfigure[MSE under different $\ext$]{ 
\label{fig: changenxy}
    \includegraphics[width=0.31\textwidth]{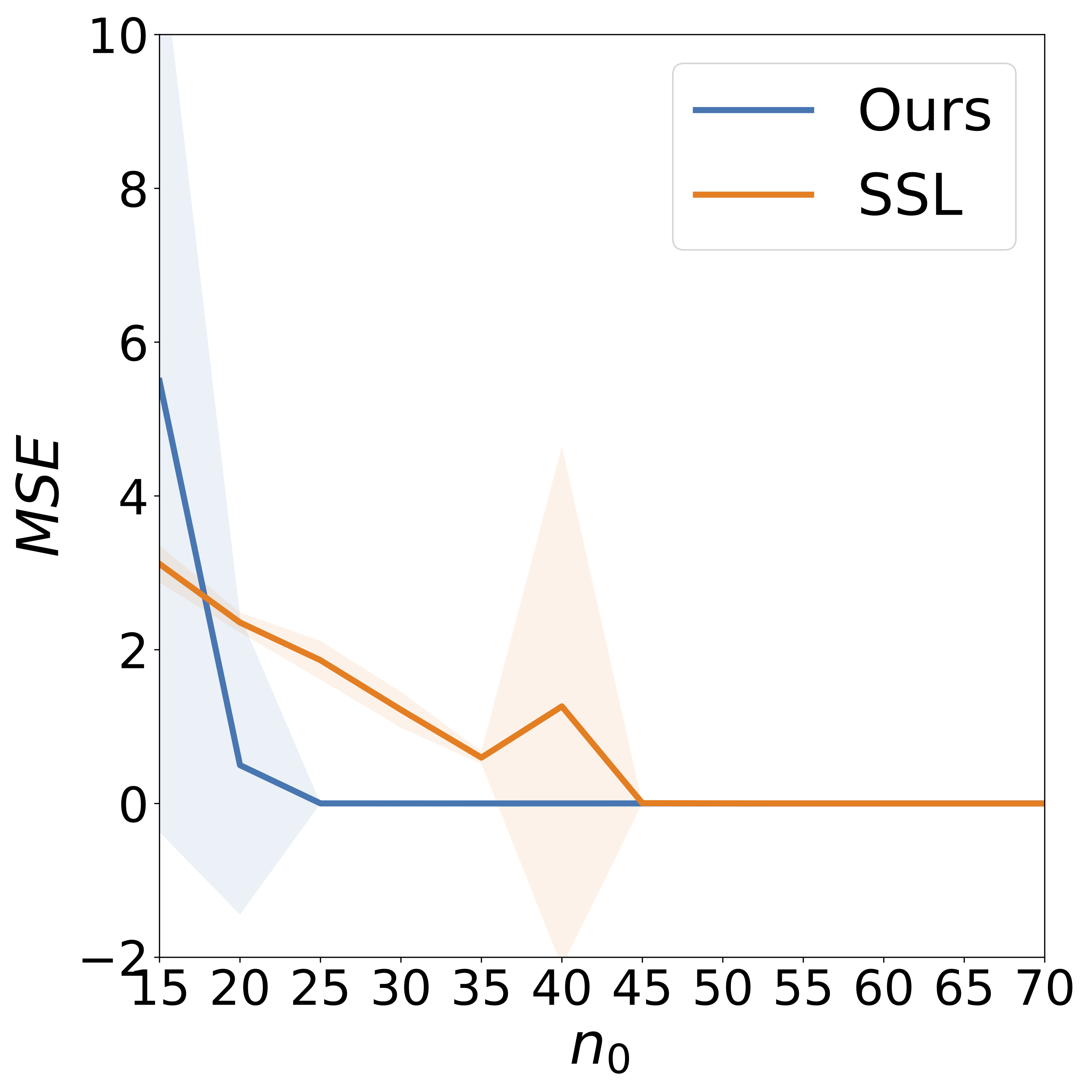}} 
  \subfigure[MSE under different $\lambda$]{ 
\label{fig: changelamb}
    \includegraphics[width=0.31\textwidth]{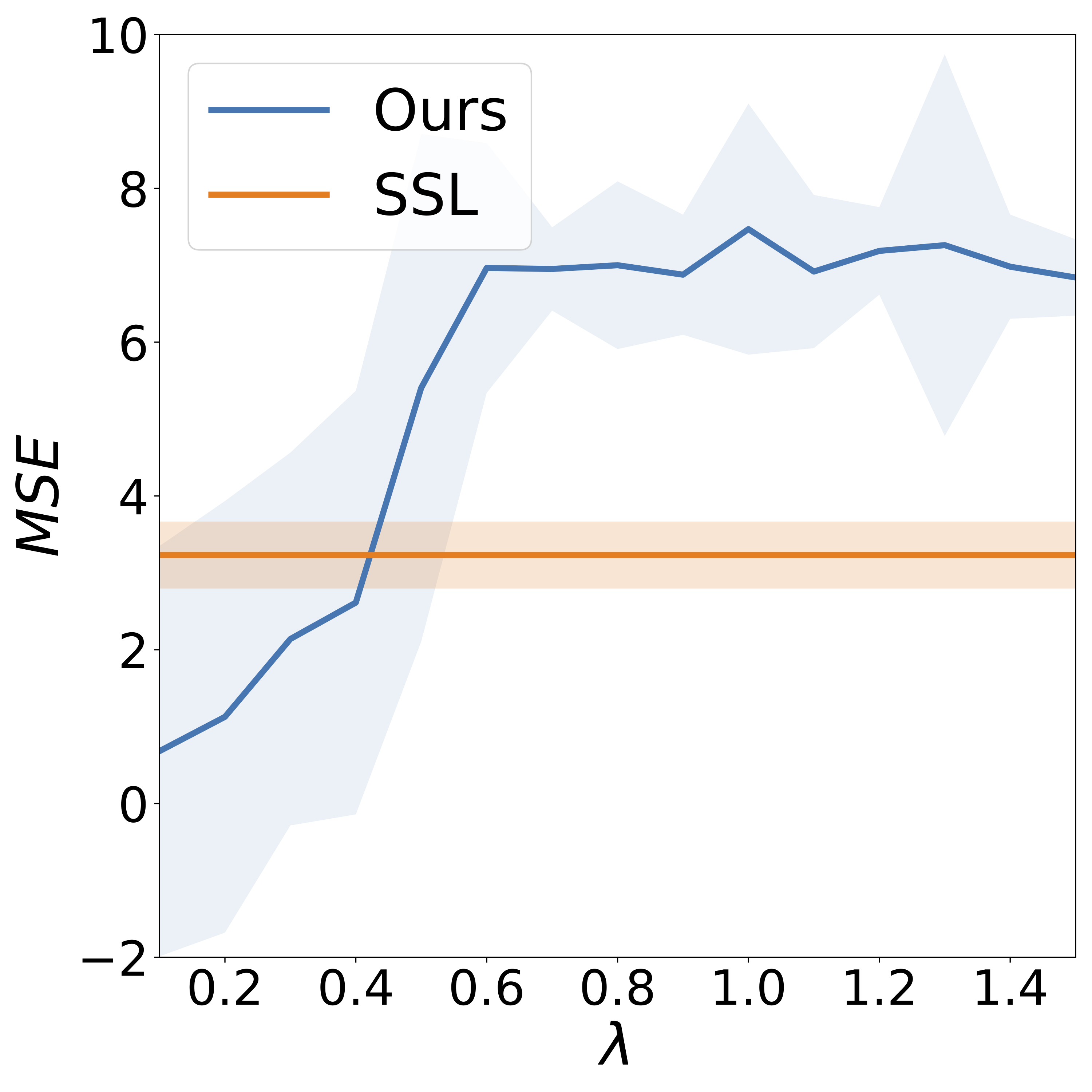}} 
    \subfigure[MSE under different $\dimc$]{ 
\label{fig: changec} 
    \includegraphics[width=0.31\textwidth]{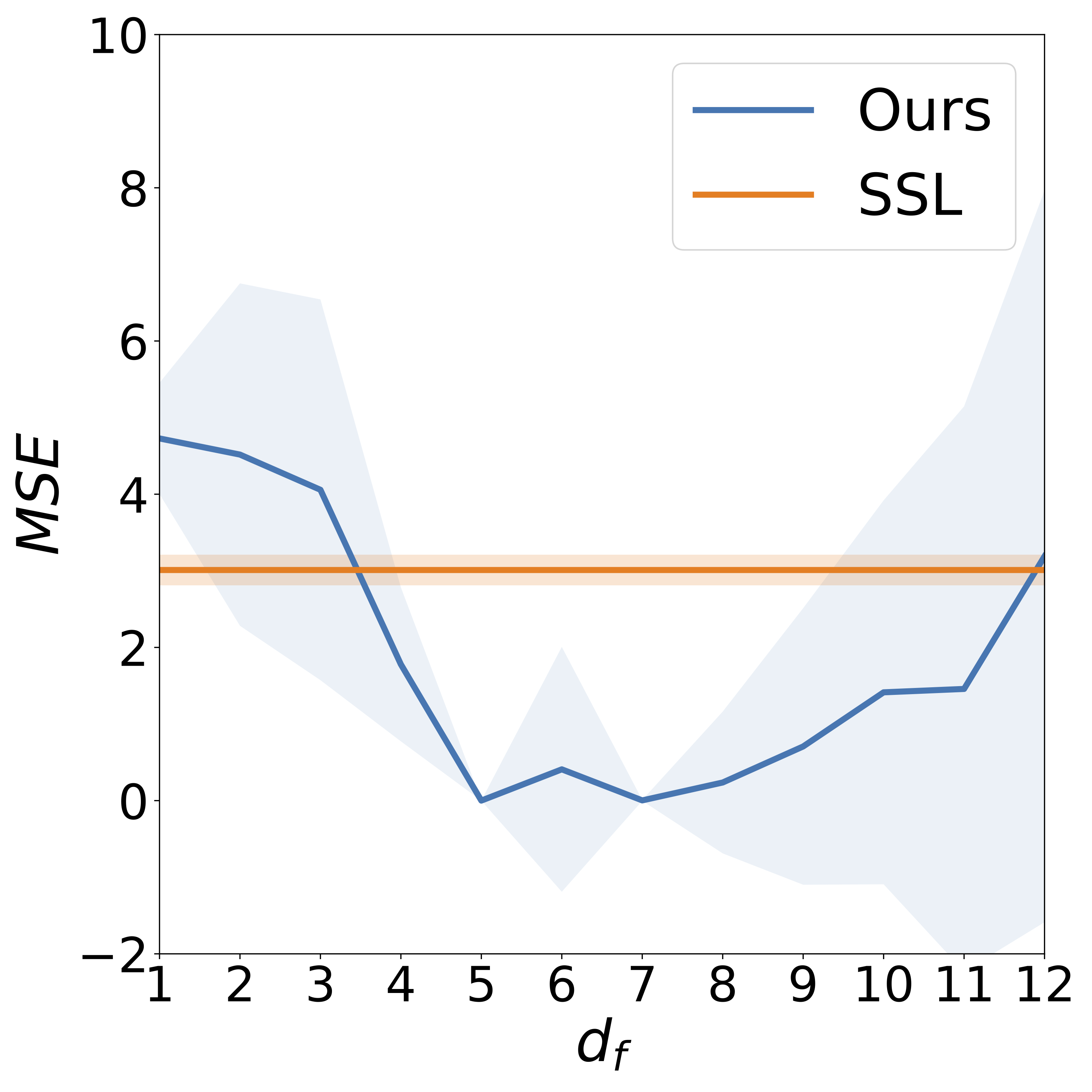}}
  \caption{\textbf{Algorithm performance under different hyperparameters on synthetic data.} (a) Our algorithm learns better with sufficient downstream samples (large $\ext$). MSE decreases to zero as downstream samples increase. (b) A large penalty forces the coarse representation to abandon the $\y$ information, leading to a large MSE. (c) When $\dimc$ is small, the model underfits; when $\dimc$ is large, the model suffers from a limited number of downstream samples.} 
  \label{fig: experiment} 
\end{figure*}

\subsection{Real-World Data (CIFAR-10)}
In this section, we conduct experiments on CIFAR-10~\citep{cifar} and show that the newly proposed processor-based method indeed fails in real-world tasks.
We demonstrate that the performance gets worse with (a) larger penalty parameter $\lambda$, (b) larger model capacity, and (c) training processors with insufficient downstream samples.

\textbf{Setup.}
We conduct experiments on CIFAR-10 and choose rotation prediction~\citep{rotation} as the baseline SSL framework.
As previously described, the training set is split into pretext samples (unlabeled) and downstream samples (labeled) without overlap to mimic the SSL training and a downstream classification task. 

The baseline follows~\citet{rotation}, where we train a plain rotation prediction task using 30k unlabeled training samples on a four-block NIN model~\citep{NINmodel}. Then we treat the 15k labeled training samples and 10k test samples as the training and test sets of the downstream classification task. A linear classifier is learned to conduct the task\footnote{To fully take advantage of the downstream samples, we require that the sum of downstream samples used in processor training and the downstream samples used in the downstream task are 20k.}. 
For the processor-based learning, the only difference is that we now specify $f$ in Equation~\eqref{eqn: lossterm} as the first two blocks in a NIN model. 
When optimizing $f$, the 5k labeled and 30k unlabeled training samples are used to minimize Loss in Equation~\eqref{eqn: lossterm}. After obtaining $f$, we fix the weights of the first two blocks in the NIN model and go through the SSL training pipeline again, same as above.

\begin{table}[t]
    \caption{\textbf{Large lambda, large model capacity hurt the performance} with 15k downstream data used in processor training. The newly proposed processor-training method gets worse with larger model, while SSL becomes better.}
    \label{tab:lambdaCIFAR}
    \centering
    \begin{tabular}{c|c|c|c|c}
    \hline
    $\lambda$& 0.001&1&10& SSL \\
    \hline
      Full  & \tabincell{c}{44.48\\(0.84)} &	\tabincell{c}{23.85\\(6.17)} &	\tabincell{c}{22.29\\(6.21)}  & \textbf{\tabincell{c}{74.28\\(0.06)}} \\
    \hline
      Double  & \tabincell{c}{38.72\\(0.78)} & \tabincell{c}{26.89\\(5.44)} & \tabincell{c}{16.86\\(5.78)} &  \textbf{\tabincell{c}{77.88\\(0.10)}}\\
      \hline
    \end{tabular}
\end{table}

\begin{table}[t]
\caption{
    \textbf{
    Training processors with insufficient downstream data hurts the performance.} With more samples used in processor training ($n_0$), the performance becomes better. However, they do not exceed standard SSL in CIFAR-10.}
    \label{tab:n0CIFAR}
    \centering
    \begin{tabular}{c|c|c|c|c|c}
    \hline
   $\ext$ & 1k& 5k& 10k& 15k & SSL \\
    \hline
     acc & \tabincell{c}{42.49\\ (1.30)}  & \tabincell{c}{43.38\\ (0.80)} & \tabincell{c}{44.76\\ (0.65)} & \tabincell{c}{44.48\\ (0.84)} & \textbf{\tabincell{c}{74.28\\ (0.06)}} \\
      \hline
    \end{tabular}
    
\end{table}

\textbf{Algorithm and Metrics.}
We use the test accuracy (acc) to evaluate the model performance.
We run the newly proposed processor training techniques with different $\lambda$ (see Equation~\eqref{eqn: lossterm}) and different model capacity (see Definition~\ref{Def: ModelCapacity}) in Table~\ref{tab:lambdaCIFAR}.
We label the original model as \emph{Full} and label the double-size model as \emph{Double}.
Obviously, Double has a larger model capacity compared to Full.
Besides, we run the newly proposed method under different downstream samples used in processor training in Table~\ref{tab:n0CIFAR}.

\textbf{Analysis.}
We list the results in Table~\ref{tab:lambdaCIFAR} and Table~\ref{tab:n0CIFAR} and defer the specific statistics in the supplementary materials.
Firstly, we test the role of $\lambda$ in Table~\ref{tab:lambdaCIFAR}, demonstrating that large $\lambda$ indeed hurts the model performance (as shown in Theorem~\ref{thm: rationality}).
Secondly, we validate Theorem~\ref{thm: warm-up} in Table~\ref{tab:n0CIFAR}, showing that a training processor with fewer downstream samples $\Xss$ results in worse performance.
We finally test the statement in Theorem~\ref{thm: model} by Table~\ref{tab:lambdaCIFAR}, showing that when double the model size, the performance gets much worse.

\section{RELATED WORKS}

\textbf{Self-Supervised Methods in Practice.}
There are three common approaches for Self-Supervised Learning (SSL):
generative model based, contrastive learning based, and pretext based.
Generative model based SSL \citep{donahue2016adversarial,dumoulin2016adversarially,donahue2019large} learns a bijective mapping between input and representation.
Contrastive learning based SSL learns representations by maximizing the mutual information between the global and local features \citep{hjelm2018learning,oord2018representation,bachman2019learning} or between the features of positive samples \citep{tian2019contrastive,he2020momentum,chen2020simple}.
Pretext based SSL learns representations via handcrafted pretext tasks \citep{zhang2016colorful,pathak2016context,noroozi2016unsupervised,gidaris2018unsupervised}.
These three approaches
are quite different in technique.
For example, contrastive learning based approaches are trained by comparing positive (and negative) pairs, while
pretext-based approaches assign a label for each unlabeled sample.
Such difference leads to each sample with a pretext label loss in pretext-based SSL, while in contrastive learning-based SSL, a batch of unlabeled samples produces a loss. 
In this paper, we focus on studying the pretext-based SSL.

\textbf{Theory for Self-Supervised Learning.}
Although there are a number of great empirical works for SSL,
the theoretical study of SSL is still at an early stage.
The most related work to ours is given by
\citet{lee2020predicting}.
They are the first to formulate pretext-based SSL, and show that it can reduce the sample complexity of downstream tasks compared with supervised learning.
They also point out that when the CI condition holds, the downstream sample complexity achieves the optimal,
and it gets worse when the CI condition does not hold.
In this paper, we further study the situation that the CI condition does not hold and
explore the idea of applying a learnable function to the input to make the CI condition hold. 
Furthermore, \citet{saunshi2020mathematical} study the specific pretext task of next word prediction and the downstream task of text classification.
Other works \citep{arora2019theoretical,tian2021understanding,wang2022chaos,huang2021towards} study the generalization error of contrastive learning based SSL, whose setting is different from our paper.
Moreover, \citet{bansal2020self} analyze the generalization gap for most SSL methods. However, the rationality gap, which is a part of the generalization gap, cannot be theoretically bounded.

\section{FUTURE WORK}

In this work, we explore the idea of using part of downstream data to boost the pretext-based self-supervised learning by making the conditional independence $ f(\x)\ci \z \midbar \y$ hold.
We show that taking limited downstream data provably hurts the performance and give both model-free and model-dependent lower bounds of sample size.
One possible future work is to rigorously prove that the pretext based self-supervised learning can be boosted
with sufficient downstream data,
suggested by the experiments (Figure~\ref{fig: experiment}).
Moreover, one can consider generalizing this paper to the contrastive learning regime to see if involving part of downstream information in the pre-training can boost the downstream performance.

\subsubsection*{Acknowledgements}
The authors would like to thank Yang Yuan (Tsinghua University) for his helpful discussion. 
Besides, the authors thank the reviewers in AISTATS~2022 for their careful and constructive comments.

\bibliography{ssl}

\clearpage
\appendix

\onecolumn \makesupplementtitle

We give all the proofs of lemmas and theorems here organized by theorems, i.e., Section~\ref{Proof: thmrationality} for
Theorem~\ref{thm: rationality} and related lemmas, Section~\ref{proof: warm-up} for Theorem~\ref{thm: warm-up}, Section~\ref{proof: general} for Theorem~\ref{thm: generalloss}, and Section~\ref{proof: model} for Theorem~\ref{thm: model}.

\section{Proof of Theorem~\ref{thm: rationality}}
\label{Proof: thmrationality}

\Rationality*

To prove Theorem~\ref{thm: rationality}, we need to construct a data distribution $\mathcal{P}_0$ such that $\mathcal{P}_0 \in \mathbb{S}$. We construct such distribution as follows: 
Let $(\x, \y, \z)$ be the input variables, pretext label, and downstream label, respectively.
Firstly, let $\E[\x]=\E[\y]=\E[\z]=0$ and $\E [\z\z^\top]\! =\! I$.
Given a marginal distribution of $(\x, \y)$, construct a linear relationship $\y = \bM\z$ where the singular values of matrix $\bM$ be $\sigma_1 =  \dots= \sigma_\dimy = \sigma$.
And choose the penalty $\lambda < \sigma^2$.

We first take a closer look at the loss function $\PopulationLoss(f)=\PopulationLoss_2+\lambda\PopulationLoss_1$. 
We can rewrite it as 
$\PopulationLoss(f)=(1-\frac{\lambda}{\sigma^2})\PopulationLoss_2+\lambda (\PopulationLoss_1+\frac{1}{\sigma^2}\PopulationLoss_2)$. 
The first term is $\PopulationLoss_2$ multiplied by a coefficient, which still captures the information of $\y$ as $\PopulationLoss_2$ does.
The second term intuitively captures the redundant information of $\z$.

We first provide Lemma~\ref{lem: lem1} and Lemma~\ref{lem: lem2} used during the proof of Theorem~\ref{thm: rationality}.
Lemma~\ref{lem: lem1} shows that $\PopulationLoss_1+\frac{1}{\sigma^2}\PopulationLoss_2$ is minimized if and only if all the redundant information of $\z$ is eliminated.

\begin{restatable}{lem}{Lema}
\label{lem: lem1}
Under the assumptions of Theorem~\ref{thm: rationality} and data distribution $\mathcal{P}_0$ with penalty $\lambda < \sigma^2$, then $\PopulationLoss_1+\frac{1}{\sigma^2}\PopulationLoss_2$ is minimized if and only if the conditional independence criterion~\eqref{C1} holds,
i.e., $\Cov[\cFuntion(\x), \z \midbar \y]=0$.
\end{restatable}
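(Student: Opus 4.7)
\textbf{Proof proposal for Lemma~\ref{lem: lem1}.}

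The plan is to reduce $\PopulationLoss_1+\frac{1}{\sigma^2}\PopulationLoss_2$ to the trace of a product of a fixed projection matrix and a positive semidefinite matrix that depends on $f$, and then read off the minimality condition. First, using the closed form of the best linear predictor, I would write
\begin{equation*}
-\PopulationLoss_1 = \tr(\E[\z\z^\top]) - \tr(S), \qquad \PopulationLoss_2 = \tr(\E[\y\y^\top]) - \tr(BSB^\top),
\end{equation*}
where $S \triangleq \E[\z f(\x)^\top]\bigl(\E[f(\x)f(\x)^\top]\bigr)^{-1}\E[f(\x)\z^\top]$ is PSD (this uses the nonsingularity assumption). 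Substituting $\y = B\z$, $\E[\z\z^\top]=I$, and $BB^\top = \sigma^2 I_{d_y}$ (since all singular values of $B$ equal $\sigma$), the $f$-independent terms become constants, leaving
\begin{equation*}
\PopulationLoss_1 + \tfrac{1}{\sigma^2}\PopulationLoss_2 = \tr\!\bigl((I - \tfrac{1}{\sigma^2}B^\top B)\, S\bigr) + \text{const}.
\end{equation*}

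Next I would observe that $P \triangleq I - \tfrac{1}{\sigma^2} B^\top B$ is the orthogonal projection onto $\ker B$: indeed $B^\top B$ has nonzero eigenvalues all equal to $\sigma^2$ on the row space of $B$ and is zero on its orthogonal complement, so $P^2 = P = P^\top$ and $\mathrm{range}(P) = \ker B$. For any PSD $S$,
\begin{equation*}
\tr(PS) = \tr(PSP) = \tr\!\bigl((SP)^\top (SP)\bigr) \ge 0,
\end{equation*}
with equality iff $SP = 0$, equivalently iff $\mathrm{range}(\E[\z f(\x)^\top]) \subset \ker P = \mathrm{range}(B^\top)$, since $S$ and $\E[\z f(\x)^\top]$ share the same column space by construction. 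Because the penalty $\lambda < \sigma^2$ only rescales the whole objective by a positive factor (after grouping $\PopulationLoss_2 = \lambda \PopulationLoss_1 + \PopulationLoss_2$ as in the paragraph above the lemma), this characterizes the minimizers exactly.

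Finally I would match this condition with Criterion~\eqref{C1}. Using $\y=B\z$, $BB^\top = \sigma^2 I$, and $\E[\z\z^\top]=I$, a direct computation gives
\begin{equation*}
\Cov[f(\x),\z\midbar \y] = \E[f(\x)\z^\top] - \E[f(\x)\y^\top](\E[\y\y^\top])^{-1}\E[\y\z^\top] = \E[f(\x)\z^\top]\,P.
\end{equation*}
Hence $\Cov[f(\x),\z\midbar\y]=0$ is equivalent to $P\E[\z f(\x)^\top]=0$, which by the step above is equivalent to $\tr(PS)=0$, which is exactly the minimality of $\PopulationLoss_1+\frac{1}{\sigma^2}\PopulationLoss_2$. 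The main obstacle is purely bookkeeping: getting the algebraic reduction to a trace against a projection clean, and verifying the ``column space of $S$ equals column space of $\E[\z f(\x)^\top]$'' step carefully so that the iff is truly an iff and not just one direction.
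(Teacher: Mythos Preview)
Your reduction to $\tr(PS)+\text{const}$ with $P=I-\tfrac{1}{\sigma^2}B^\top B$ and $S=\E[\z f(\x)^\top](\E[f(\x)f(\x)^\top])^{-1}\E[f(\x)\z^\top]$ is exactly the paper's route; the paper then treats the sufficient and necessary directions separately via longer matrix manipulations, whereas your observation that $P$ is the orthogonal projection onto $\ker B$ together with the identity $\Cov[f(\x),\z\mid\y]=\E[f(\x)\z^\top]P$ packages both directions at once, which is cleaner.

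Two small fixes. First, the displayed equality $\tr(PSP)=\tr((SP)^\top(SP))$ is false in general (the right side is $\tr(PS^2P)$); write instead $\tr(PSP)=\|S^{1/2}P\|_F^2$, from which $\tr(PSP)=0\Leftrightarrow S^{1/2}P=0\Leftrightarrow SP=0$ follows and the rest of your chain goes through. Second, to conclude that the infimum over $f\in\mathcal{F}$ is actually $0$ (so that minimizers are precisely the $f$ with $\tr(PS)=0$), you must invoke the assumed ground truth $f^*\in\mathcal{F}$ satisfying \eqref{C1}; the paper uses this explicitly in its necessary direction. Your aside about $\lambda<\sigma^2$ is not needed here; that hypothesis only enters in Lemma~\ref{lem: lem2}.
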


The advantage of decomposing $\PopulationLoss(f)$ into $(1-\frac{\lambda}{\sigma^2})\PopulationLoss_2$ and  $\PopulationLoss_1+\frac{1}{\sigma^2}\PopulationLoss_2$ is that 
these two terms can be optimized individually.
In other words, when $\PopulationLoss(f)$ is minimized, the above two terms are also minimized.
We state it formally as the following Lemma~\ref{lem: lem2}.

\begin{restatable}{lem}{Lemb}
\label{lem: lem2}
Under the assumptions of Theorem~\ref{thm: rationality} and data distribution $\mathcal{P}_0$ with penalty $\lambda < \sigma^2$, 
if $\PopulationLoss(f)$ is minimized,
then  $(1-\frac{\lambda}{\sigma^2})\PopulationLoss_2$ and  $\PopulationLoss_1+\frac{1}{\sigma^2}\PopulationLoss_2$ are both minimized.
\end{restatable}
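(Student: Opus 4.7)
The plan is to use the algebraic identity $\PopulationLoss(f)=(1-\frac{\lambda}{\sigma^2})\PopulationLoss_2(f)+\lambda\bigl(\PopulationLoss_1(f)+\frac{1}{\sigma^2}\PopulationLoss_2(f)\bigr)$ together with the existence of a function $f^*$ that simultaneously minimizes both summands. The hypothesis $\lambda<\sigma^2$ guarantees that the coefficient $(1-\frac{\lambda}{\sigma^2})$ is strictly positive, so $\PopulationLoss(f)$ is a positive combination of two nonnegative-deviation terms; once we can exhibit a joint minimizer, any global minimizer of $\PopulationLoss$ is forced to minimize each summand individually.

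The main steps are as follows. First, I would verify the decomposition identity by direct expansion. Second, I would invoke the hypothesis of Theorem~\ref{thm: rationality}: there exists $\cFuntionStar\in\mathcal{F}$ satisfying both Criterion~\eqref{C1} and \eqref{C2}. By Criterion~\eqref{C2}, $\cFuntionStar\in\argmin_f \PopulationLoss_2(f)$; by Criterion~\eqref{C1} and Lemma~\ref{lem: lem1}, $\cFuntionStar\in\argmin_f\bigl(\PopulationLoss_1(f)+\frac{1}{\sigma^2}\PopulationLoss_2(f)\bigr)$. Third, for an arbitrary $f\in\mathcal{F}$, the decomposition gives the elementary lower bound
\begin{equation*}
\PopulationLoss(f)\ge\Bigl(1-\tfrac{\lambda}{\sigma^2}\Bigr)\min_{f'}\PopulationLoss_2(f')+\lambda\min_{f'}\Bigl(\PopulationLoss_1(f')+\tfrac{1}{\sigma^2}\PopulationLoss_2(f')\Bigr),
\end{equation*}
and by the second step this lower bound is attained at $\cFuntionStar$. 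Therefore $\cFuntionStar$ is a global minimizer of $\PopulationLoss$ and the displayed lower bound equals $\min_f\PopulationLoss(f)$.

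Finally, suppose $\hat f$ is any minimizer of $\PopulationLoss(f)$. Then the inequality above must be an equality for $\hat f$, i.e.
\begin{equation*}
\Bigl(1-\tfrac{\lambda}{\sigma^2}\Bigr)\bigl[\PopulationLoss_2(\hat f)-\min\PopulationLoss_2\bigr]+\lambda\bigl[(\PopulationLoss_1+\tfrac{1}{\sigma^2}\PopulationLoss_2)(\hat f)-\min(\PopulationLoss_1+\tfrac{1}{\sigma^2}\PopulationLoss_2)\bigr]=0.
\end{equation*}
Since both bracketed quantities are nonnegative and both coefficients are strictly positive (using $\lambda<\sigma^2$ for the first and $\lambda>0$ for the second), each bracket must vanish. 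This yields exactly the conclusion: $\hat f$ minimizes $\PopulationLoss_2$ (hence $(1-\frac{\lambda}{\sigma^2})\PopulationLoss_2$), and $\hat f$ minimizes $\PopulationLoss_1+\frac{1}{\sigma^2}\PopulationLoss_2$.

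The only nontrivial step is constructing the joint minimizer, and this is handed to us by the hypothesis of Theorem~\ref{thm: rationality} combined with Lemma~\ref{lem: lem1}; everything else is linearity and positivity of coefficients. I do not foresee any substantial obstacle, beyond being careful that the decomposition really is an identity and that the sign condition $\lambda<\sigma^2$ is what makes the first coefficient positive so that the standard "sum of nonnegative terms equals zero implies each is zero" argument applies.
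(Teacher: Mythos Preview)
Your proposal is correct and follows essentially the same approach as the paper's own proof: decompose $\PopulationLoss(f)$ into the two summands, use the existence of $\cFuntionStar$ (via Criterion~\eqref{C2} and Lemma~\ref{lem: lem1}) to exhibit a simultaneous minimizer, and conclude that any global minimizer must minimize each summand because the coefficients are strictly positive. Your write-up is in fact a bit more explicit than the paper's in spelling out the ``sum of nonnegative terms equals zero'' step, but the argument is the same.
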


According to Lemma~\ref{lem: lem2}, for each function $f$ that minimizes the loss, it also minimizes $(1-\frac{\lambda}{\sigma^2})\PopulationLoss_2$ and  $\PopulationLoss_1+\frac{1}{\sigma^2}\PopulationLoss_2$ at the same time.
For the first term, since $1-\frac{\lambda}{\sigma^2}$ is a positive coefficient, $\PopulationLoss_2$ is minimized.
Therefore, Criterion~\eqref{C2} holds.
Since the second term is minimized, Criterion~\eqref{C1} holds by Lemma~\ref{lem: lem1}.
Thus, such $f$ satisfies both Criterion~\eqref{C1} and \eqref{C2}.
We next  prove Lemma~\ref{lem: lem1} and Lemma~\ref{lem: lem2} in Section~\ref{Proof: lem1} and Section~\ref{Proof: lem2}, respectively,

\subsection{Proof of Lemma~\ref{lem: lem1}}
\label{Proof: lem1}
\Lema*

\begin{proof}
Criterion~\eqref{C1} is equivalent to
\begin{equation*}
\Sigma_{\cFuntion\left(\x\right), \z | \y} = \Sigma_{\z \cFuntion\left(\x\right)} - \Sigma_{\y \cFuntion\left(\x\right)}\Sigma^{-1}_{\y \y}\Sigma_{\y \z} = 0.
\end{equation*}
Notice that $\E[\y]=\E[\z]=\E[f(\x)]=0$, thus the above equation can be rewritten as
\begin{equation}
\label{Eqn:independence}
\E[\cFuntion\left(\x\right)\z^\top] = \E[\cFuntion\left(\x\right)\y^\top]\left(\E [\y\y^\top]\right)^{-1}\E[\y\z^\top].
\end{equation}

On the other hand, we express the term $\mathcal{L}_1+\frac{1}{\sigma^2} \mathcal{L}_2$ as
\begin{equation*}
\begin{split}
     &\mathcal{L}_1+\frac{1}{\sigma^2} \mathcal{L}_2 \\
    =&~   \frac{1}{\sigma^2} \E\norm{ \y - W^*_{\y, \cFuntion\left(\x\right)} \cFuntion\left(\x\right)}^2 - \E\norm{ \z - W^*_{\z, \cFuntion\left(\x\right)} \cFuntion\left(\x\right)}^2 \\
    =&~  \tr[\frac{1}{\sigma^2}\E\left[\left(\y - W^*_{\y, \cFuntion\left(\x\right)} \cFuntion\left(\x\right)\right)\left(\y - W^*_{\y, \cFuntion\left(\x\right)} \cFuntion\left(\x\right)\right)^\top\right] \\
    &-\E\left[\left(\z - W^*_{\z, \cFuntion\left(\x\right)} \cFuntion\left(\x\right)\right)\left(\z - W^*_{\z, \cFuntion\left(\x\right)} \cFuntion\left(\x\right)\right)^\top\right]]\\
    =&~  \tr[ \frac{1}{\sigma^2} \left(\E [\y\y^{\top}] - \E\left[\y \cFuntion^\top\left(\x\right)\right] \left(\E[\cFuntion \left(\x\right)\cFuntion^\top \left(\x\right)]\right)^{-1} \E\left[\cFuntion\left(\x\right)\y^\top\right]\right) \\
    &- \left(\E [\z\z^{\top}]  -\E\left[\z \cFuntion^\top\left(\x\right)\right] \left(\E[\cFuntion \left(\x\right)\cFuntion^\top \left(\x\right)]\right)^{-1} \E\left[\cFuntion \left(\x\right)\z^\top\right]\right) ]\\
    =&~  \tr [\E\left[\z \cFuntion^\top\left(\x\right)\right] \left(\E[\cFuntion \left(\x\right)\cFuntion^\top \left(\x\right)]\right)^{-1} \E\left[\cFuntion \left(\x\right)\z^\top\right] \\
    &- \frac{1}{\sigma^2}\E\left[\y \cFuntion^\top\left(\x\right)\right] \left(\E[\cFuntion \left(\x\right)\cFuntion^\top \left(\x\right)]\right)^{-1} \E\left[\cFuntion\left(\x\right)\y^\top\right]] +  \left(\frac{1}{\sigma^2}\E [\y^{\top}\y] - \E[ \z^{\top}\z] \right)\\
    =&~ \medmath{ \tr\left[ \left(I - \frac{1}{\sigma^2}\bM^\top \bM\right) \E\left[\z \cFuntion^\top\left(\x\right)\right] \left(\E[\cFuntion \left(\x\right)\cFuntion^\top \left(\x\right)]\right)^{-1} \E\left[\cFuntion \left(\x\right)\z^\top\right]\right] +  \left(\frac{1}{\sigma^2}\E [\y^{\top}\y] - \E [\z^{\top}\z] \right)},
\end{split}
\end{equation*}
where the third equation holds because 
$W^*_{\y, \cFuntion\left(\x\right)}=\E[\y f^\top\left(\x\right)]\left(\E[f\left(\x\right)f^\top\left(\x\right)]\right)^{-1}$
and the last equation follows the assumption that $\y = \bM \z$.
Notice that the second term $\frac{1}{\sigma^2} \medmath{\E \left[\y^{\top}\y\right] - \E \left[\z^{\top}\z\right]} $ is unrelated to $\cFuntion$.
Therefore, minimizing $\mathcal{L}_1+\frac{1}{\sigma^2} \mathcal{L}_2$ is equivalent to minimizing 
\begin{equation}
\label{EQN: T}
T := \tr\left[ \left(I - \frac{1}{\sigma^2}\bM^\top \bM\right) \E\left[\z \cFuntion^\top\left(\x\right)\right] \left(\E\left[\cFuntion \left(\x\right)\cFuntion^\top \left(\x\right)\right]\right)^{-1} \E\left[\cFuntion \left(\x\right)\z^\top\right]\right].
\end{equation}

(a) We first prove the \textbf{sufficient condition}, namely, the conditional independence criterion \eqref{C1} leads to minimizing $\mathcal{L}_1+\frac{1}{\sigma^2} \mathcal{L}_2$.

Plugging Equation~\eqref{Eqn:independence} to Equation~\eqref{EQN: T}, we have
\begin{equation*}
\small
    \begin{split}
T &= \tr \left[\left(I - \frac{1}{\sigma^2}\bM^\top \bM\right) \E\left[\z \cFuntion^\top\left(\x\right)\right] \left(\E\left[\cFuntion \left(\x\right)\cFuntion^\top \left(\x\right)\right]\right)^{-1} \E\left[\cFuntion \left(\x\right)\z^\top\right]\right] \\
&= \tr\left[ \E\left[\cFuntion \left(\x\right)\z^\top\right] \left(I - \frac{1}{\sigma^2}\bM^\top \bM\right) \E\left[\z \cFuntion^\top\left(\x\right)\right] \left(\E\left[\cFuntion \left(\x\right)\cFuntion^\top \left(\x\right)\right]\right)^{-1} \right]\\
&= \medmath{\tr\left[ \E\left[\cFuntion\left(\x\right)\y^\top\right]\left(\E [\y\y^\top]\right)^{-1}\E[\y\z^\top] \left(I - \frac{1}{\sigma^2}\bM^\top \bM\right) 
\left(\E\left[\cFuntion\left(\x\right)\y^\top\right]\left(\E [\y\y^\top]\right)^{-1}\E[\y\z^\top]\right)^\top \left(\E\left[\cFuntion \left(\x\right)\cFuntion^\top \left(\x\right)\right]\right)^{-1} \right]}.
    \end{split}
\end{equation*}

By assumption that $\y = \bM \z$ and $\E [\z\z^\top] = I$, we have
\begin{equation*}
\begin{split}
&~\E\left[\cFuntion\left(\x\right)\y^\top\right]\left(\E [\y\y^\top]\right)^{-1}\E[\y\z^\top] \left(I - \frac{1}{\sigma^2}\bM^\top \bM\right) 
\left(\E\left[\cFuntion\left(\x\right)\y^\top\right]\left(\E [\y\y^\top]\right)^{-1}\E[\y\z^\top]\right)^\top\\
    =&~ \medmath{\E\left[\cFuntion\left(\x\right)\z^\top\right]\bM^\top \left(\E [\bM\z\z^\top\bM^\top]\right)^{-1} \bM \E[\z\z^\top] \left(I - \frac{1}{\sigma^2}\bM^\top \bM\right) \left(\E\left[\cFuntion\left(\x\right)\z^\top\right]\bM^\top \left(\E [\bM\z\z^\top\bM^\top]\right)^{-1}\bM \E[\z\z^\top]\right)^\top} \\
=&~ \E\left[\cFuntion\left(\x\right)\z^\top\right]\bM^\top [\bM\bM^\top]^{-1} \bM  \left(I - \frac{1}{\sigma^2}\bM^\top \bM\right) \left(\E\left[\cFuntion\left(\x\right)\z^\top\right]\bM^\top [\bM\bM^\top]^{-1}\bM\right)^\top \\
=&~ \E\left[\cFuntion\left(\x\right)\z^\top\right]\bM^\top [\bM\bM^\top]^{-1}  \left[\bM \bM^\top  - \frac{1}{\sigma^2}\bM \bM^\top \bM \bM^\top \right] [\bM\bM^\top]^{-1} \bM \E\left[\z \cFuntion\left(\x\right)^\top\right] \\
=&~ \E\left[\cFuntion\left(\x\right)\z^\top\right]\bM^\top [\bM\bM^\top]^{-1} \cdot 0_{\dimy\times\dimy} \cdot [\bM\bM^\top]^{-1} \bM \E\left[\z \cFuntion\left(\x\right)^\top\right]\\
=&~ 0.
\end{split}
\end{equation*}

Furthermore, notice that the eigenvalues of $I - \frac{1}{\sigma^2}\bM^\top \bM$ is no less than zero by the definition of $\sigma$. And notice that the eigenvalues of $\E\left[\z \cFuntion^\top\left(\x\right)\right] \left(\E\left[\cFuntion \left(\x\right)\cFuntion^\top \left(\x\right)\right]\right)^{-1} \E\left[\cFuntion \left(\x\right)\z^\top\right]$ is also no less than zero.
Therefore, $T$ reaches the minimum when it reaches zero.
To conclude, the sufficient condition holds.

(b) We next prove the \textbf{necessary condition}, namely, 
minimizing $\mathcal{L}_1+\frac{1}{\sigma^2} \mathcal{L}_2$ leads to the conditional independence.

Under the assumption that there exists a ground truth $\cFuntionStar$ satisfying Criterion~\eqref{C1} and \eqref{C2}, we see from the sufficient condition that when $T$ reaches the minimum, $T$ must be equal to zero:
\begin{equation*}
 T=\tr\left[ \left(I - \frac{1}{\sigma^2}\bM^\top \bM\right) \E\left[\z \cFuntion^\top\left(\x\right)\right] \left(\E\left[\cFuntion \left(\x\right)\cFuntion^\top \left(\x\right)\right]\right)^{-1} \E\left[\cFuntion \left(\x\right)\z^\top\right]\right] = 0.   
\end{equation*}

Since the matrix is semi-definite, we can omit the trace term and rewrite it as:

\begin{equation*}
\E\left[\cFuntion \left(\x\right)\z^\top\right] \left(I - \frac{1}{\sigma^2}\bM^\top \bM\right) \E\left[\z \cFuntion^\top\left(\x\right)\right] \left(\E\left[\cFuntion \left(\x\right)\cFuntion^\top \left(\x\right)\right]\right)^{-1}  = 0.
\end{equation*}

Besides, since $\E\left[\cFuntion \left(\x\right)\cFuntion^\top \left(\x\right)\right]$ is non-singular, we have
\begin{equation*}
\E\left[\cFuntion \left(\x\right)\z^\top\right] \left(I - \frac{1}{\sigma^2}\bM^\top \bM\right) \E\left[\z \cFuntion^\top\left(\x\right)\right]  = 0.
\end{equation*}

On the other hand, we represent the covariance as the follows based on the assumption $\E\left[\z\z^\top\right] = I$,
\begin{equation*}
\Sigma_{\cFuntion\left(\x\right), \z | \y} =\E\left[\cFuntion\left(\x\right) \z^\top\right] - \E\left[\cFuntion\left(\x\right)\y^\top\right]\left(\E \left[\y\y^\top\right]\right)^{-1}\E\left[\y\z^\top\right] = \E\left[\cFuntion\left(\x\right)\z^\top\right] \left[I -\frac{1}{\sigma^2} \bM^\top \bM\right].
\end{equation*}

Finally, the covariance meets
\begin{equation*}
\Sigma_{\cFuntion\left(\x\right), \z | \y} \Sigma^\top_{\cFuntion\left(\x\right), \z | \y} = \E \left[ \cFuntion\left(\x\right) \z^\top\right] \left[I -\frac{1}{\sigma^2} B^\top B\right]\E \left[ \z \cFuntion\left(\x\right)^\top\right] =0.
\end{equation*}

This leads to the conclusion that $\Sigma_{\cFuntion\left(\x\right), \z | \y} = 0$.

Combining (a) and (b), we finish the proof.
\end{proof}

\subsection{Proof of Lemma~\ref{lem: lem2}}
\label{Proof: lem2}
\Lemb*
\begin{proof}
Notice that there exists a ground truth $\cFuntionStar$ that satisfies both Criterion~\eqref{C1} and Criterion~\eqref{C2}.
By definition of Criterion~\eqref{C2}, the ground truth $\cFuntionStar$ should minimize $\mathcal{L}_2$.
On the other hand, we have proved in Section~\ref{Proof: lem1} that satisfying Criterion~\eqref{C2} leads to minimizing $\mathcal{L}_1 + \frac{1}{\sigma^2} \mathcal{L}_2$.

Notice that $$\mathcal{L}(f) =\lambda\mathcal{L}_1 + \mathcal{L}_2 =\left(1-\frac{\lambda}{\sigma^2}\right) \mathcal{L}_2 + \lambda\left(\mathcal{L}_1 + \frac{1}{\sigma^2} \mathcal{L}_2\right),$$ with $\lambda>0$.
Therefore, the ground truth $\cFuntionStar$ minimizes $\left(1-\frac{\lambda}{\sigma^2}\right) \mathcal{L}_2$ and $\mathcal{L}_1 + \frac{1}{\sigma^2} \mathcal{L}_2$ at the same time.
Thus, any function $f$ minimizing the loss must minimize both $\left(1-\frac{\lambda}{\sigma^2}\right) \mathcal{L}_2$ and $\mathcal{L}_1 + \frac{1}{\sigma^2} \mathcal{L}_2$,
or it would be larger than $\mathcal{L}(\cFuntionStar)$.
\end{proof}

\section{Proof of Theorem~\ref{thm: warm-up}}
\label{proof: warm-up}
\warmup*

\begin{proof}

We consider the distribution $\mathcal{P}_0$ as in Section~\ref{Proof: thmrationality}, that is:
Firstly, let $\E[\x]=\E[\y]=\E[\z]=0$ and $\E [\z\z^\top]\! =\! I$.
Given a marginal distribution of $(\x, \y)$, construct a linear relationship $\y = \bM\z$ where the singular values of matrix $\bM$ be $\sigma_1 =  \dots= \sigma_\dimy = \sigma$.
And choose the penalty $\lambda < \sigma^2$.

Let us first consider the training process.
We first claim that when $n_0 = o(\dimc)$, for any $f$, the first term of $\mathcal{L}_{\infty,n_0}$ can be trained to zero, i.e., 
\begin{equation*}
    \frac{1}{n_0}\norm{\yMatrix_{down1} - \widetilde{W}_2 \cFuntion\left(\xMatrix_{down1}\right)}^2 = 0,
\end{equation*}
by taking
$$\widetilde{W}_2 = \yMatrix_{down1}\cFuntion^\top\left(\xMatrix_{down1}\right)\left[\cFuntion\left(\xMatrix_{down1}\right) \cFuntion^\top\left(\xMatrix_{down1}\right)\right]^{-1} ,$$
where we abuse $\cFuntion\left(\xMatrix_{down1}\right) \in \mathbb{R}^{ \dimc\times n_0}$.

Therefore,  to minimize the loss $\mathcal{L}_{\infty, n_2^\prime}$ only needs to minimize the second term of $\mathcal{L}_{\infty,n_0}$:
\begin{equation*}
\hat{f} \in \argmin_f \left[ -\min_W \E_{\x,\z} \left\| \z - W \cFuntion\left(\x\right)  \right\|^2\right].
\end{equation*}

Note that for any $f$,
\begin{equation*}
\min_W \E_{\x,\z} \left\| \z - W \cFuntion\left(\x\right)  \right\|^2 
= \E \norm{\z}^2-\tr \left[\E \left[ \z \cFuntion^\top\left(\x\right)\right] \left(\E \left[ \cFuntion\left(\x\right) \cFuntion^\top\left(\x\right)\right]\right)^{-1} \E \left[ \cFuntion\left(\x\right) \z^\top\right]\right]
\leq \E \| \z \|^2.    
\end{equation*}

When $f\left(\x\right)$ is uncorrelated to $\z$ (i.e., $f=f_2$), the equality holds, i.e.,
\begin{equation*}
\min_W \E_{\x,\z} \left\| \z - W \cFuntion_2\left(\x\right)  \right\|^2 = \E \| \z \|^2.
\end{equation*}

Therefore, any function $\hatcFuntion$ that minimizes the loss satisfies:
\begin{equation*}
 \min_W \E_{\x,\z} \left\| \z - W \hatcFuntion\left(\x\right)  \right\|^2 = \E \| \z \|^2.
\end{equation*}

We next prove that any $\hatcFuntion$ that minimizes training loss $\mathcal{L}_{\infty, n_2^\prime}$ could not contain any information of $\y$.
Under the condition that $\E \left[\hatcFuntion\left(\x\right) \hatcFuntion^\top\left(\x\right)\right]$ is invertible, we derive that

\begin{equation*}
\begin{split}
    \min_W \E_{\x,\y} \left\| \y - W \hatcFuntion\left(\x\right)  \right\|^2 
    &= \E \|\y\|^2 - \tr \left[\E \left[ \y \hatcFuntion^\top\left(\x\right)\right] \left(\E \left[ \hatcFuntion\left(\x\right) \hatcFuntion^\top\left(\x\right)\right]\right)^{-1} \E \left[ \hatcFuntion\left(\x\right) \y^\top\right]\right]\\
    &= \E \|\y\|^2 - \tr \left[ \bM \E \left[ \z \hatcFuntion^\top\left(\x\right)\right] \left(\E \left[ \hatcFuntion\left(\x\right) \hatcFuntion^\top\left(\x\right)\right]\right)^{-1} \E \left[ \hatcFuntion\left(\x\right) \z^\top\right] \bM^\top \right] \\
    &= \E \|\y\|^2 - \tr \left[ \bM^\top \bM \E \left[ \z \hatcFuntion^\top\left(\x\right)\right] \left(\E \left[ \hatcFuntion\left(\x\right) \hatcFuntion^\top\left(\x\right)\right]\right)^{-1} \E \left[ \hatcFuntion\left(\x\right) \z^\top\right]  \right] \\
    &= \E \|\y\|^2 - \tr \left[\bM^\top \bM\right] \left(\E\|\z \|^2 - \min_W \E\norm{ \z - W \hat{f}\left(\x\right)}^2 \right) \\
    &= \E \|\y\|^2.
\end{split}
    \end{equation*}
However, since there exists $f_1$ such that $\operatorname{Cov}(f_1(\x), \y) \not = 0$, leading to 
\begin{equation*}
 \min_W \E_{\x,\y} \left\| \y - W \hatcFuntion\left(\x\right)  \right\|^2  < \min_W \E_{\x,\y} \left\| \y - W f_1\left(\x\right)  \right\|^2     
\end{equation*}

Therefore, $\hatcFuntion$ violates Criterion~\eqref{C2}.
The proof is done.
\end{proof}
\section{Proof of Theorem~\ref{thm: generalloss}}
\label{proof: general}
\generalloss*
\begin{proof}

Denote the trained predictor as $\hatcFuntion$.
Notice that with $n = o\left(\dimc\right)$, $\ySample = \widehat{W}_2 \cFuntion\left(\xSample\right) $ by setting $$\widehat{W}_2 = \left(\yMatrix_{down1}\right)^\top\left[\cFuntion\left(\xMatrix_{down1}\right) \cFuntion\left(\xMatrix_{down1}\right)^\top\right]^{-1} \cFuntion\left(\xMatrix_{down1}\right),$$ where we abuse the notation $f\left(X\right)\in\R^{n\times \dimc}$.
This leads to
$$\min_W \gTwo \rhoTwo\left(\yMatrix_{down1}, W\hatcFuntion\left(\xMatrix_{down1}\right)\right) = 0,$$
on the training set.

Therefore, $\hatcFuntion$ is trained to minimize the loss
$$
\mathcal{L}_1 = -\min_W \E \gOne \rhoOne \left(\z, W \cFuntion\left(\x\right)\right),
$$
leading to the independence between $\cFuntion(\x)$ and $\z$.

By the definition of $\rhoTwo$ and $\rhoOne$, we have for any $\lin>0$,
\begin{equation}
\label{eqn: thm3eq1}
    \begin{split}
\min_{W_1} \E \gTwo \rhoTwo \left(\y, W_1 \cFuntion\left(\x\right)\right) 
\triangleq  \E \gTwo \rhoTwo \left(\y, \widehat{W}_1 \hatcFuntion\left(\x\right)\right)
= \E \gTwo \rhoOne \left(\bar{\y}, \widehat{\overline{W}}_1 \hatcFuntion\left(\x\right)\right),
\end{split}
\end{equation}
where $\bar{\y}, \widehat{\overline{W}}_1$ are the augmented version filled with zero.
By the upper bound of $\frac{ \mathrm{d} \gTwo\left(x\right)}{\mathrm{d} x}$, we have
\begin{equation}
\label{eqn: thm3eq2}
    \begin{split}
        & \gTwo \rhoOne \left(\bar{\y}, \widehat{\overline{W}}_1 \hatcFuntion\left(\x\right)\right) - \gTwo \rhoOne \left(\lin \z, \widehat{\overline{W}}_1 \hatcFuntion\left(\x\right)\right) \\
        =& \frac{ \mathrm{d} \gTwo\left(x\right)}{\mathrm{d} x}|_{x=\xi} \left[\rhoOne \left(\bar{\y}, \widehat{\overline{W}}_1 \hatcFuntion\left(\x\right)\right) - \rhoOne \left(\lin \z, \widehat{\overline{W}}_1 \hatcFuntion\left(\x\right)\right) \right] \\
        \geq & \frac{ \mathrm{d} \gTwo\left(x\right)}{\mathrm{d} x}|_{x=\xi} \left[-\rhoOne\left(\bar{\y}, \lin \z\right)\right]\\
        \geq & -M \rhoOne\left(\bar{\y}, \lin \z\right).
    \end{split}
\end{equation}
where the first equation is due to mean value theorem of integrals with point $\xi$.
The second equality is due to $\frac{ \mathrm{d} \gTwo\left(x\right)}{\mathrm{d} x} \geq 0$ and triangle inequality.
And the third inequality is due to the condition $\frac{ \mathrm{d} \gTwo\left(x\right)}{\mathrm{d} x}|_{x=\xi}\leq M$, and $\xi \leq \max\{ \rhoOne \left(\bar{\y}, \widehat{\overline{W}}_1 \hatcFuntion\left(\x\right)\right), \rhoOne \left(\lin \z, \widehat{\overline{W}}_1 \hatcFuntion\left(\x\right)\right) \}\leq Q_1 + Q_2$.

Besides, notice that
\begin{equation}
\label{eqn: thm3eq3}
    \begin{split}
        & \E \gTwo \rhoOne \left(\lin \z, \widehat{\overline{W}}_1 \hatcFuntion\left(\x\right)\right)  \\
= & \E \gTwo \gOne^{-1} \gOne\left(\rhoOne\left(\lin \z, \widehat{\overline{W}}_1 \hatcFuntion\left(\x\right)\right)\right) \\
\overset{\left(i\right)}{\geq} & \gTwo \gOne^{-1} \E \gOne\left(\rhoOne\left(\lin \z, \widehat{\overline{W}}_1 \hatcFuntion\left(\x\right)\right)\right)  \\
\overset{\left(ii\right)}{\geq} & \gTwo \gOne^{-1} \min_W \E \gOne\left(\rhoOne\left(\lin \z, W \hatcFuntion\left(\x\right)\right)\right) \\
= & \gTwo \gOne^{-1} \max_f \min_W \E \gOne\left(\rhoOne\left(\lin \z, W f\left(\x\right)\right)\right),
    \end{split}
\end{equation}
where inequality~(i) follows Jensen's inequality with condition $\gTwo \gOne^{-1}$ is convex, and inequality~(ii) follows that $\gTwo \gOne^{-1}$ is increasing.
The final equation is due to the independence between $\cFuntion(\x)$ and $\z$.

Combining Equation~\eqref{eqn: thm3eq1}, Equation~\eqref{eqn: thm3eq2}, and Equation~\eqref{eqn: thm3eq3}, leads to
\begin{equation*}
    \begin{split}
&\min_{W_1} \E \gTwo \rhoTwo \left(\y, W_1 \cFuntion\left(\x\right)\right) \\
=& \E \gTwo \rhoOne \left(\bar{\y}, \widehat{\overline{W}}_1 \hatcFuntion\left(\x\right)\right) \\
\geq& \E \gTwo\left(\rhoOne\left(\lin \z, \widehat{\overline{W}}_1 \hatcFuntion\left(\x\right)\right)\right) - M \E \rhoOne \left(\bar{\y}, \lin \z\right)\\
\geq & \gTwo \gOne^{-1} \max_f \min_W \E \gOne\left(\rhoOne\left(\lin \z, W \hatcFuntion\left(\x\right)\right)\right) - M \E \rhoOne \left(\bar{\y}, \lin \z\right)\\
>& \min_f \min_W \E  \gTwo \rhoTwo \left(\y, W_1 \cFuntion\left(\x\right)\right).
    \end{split}
\end{equation*}
This contradicts with Criterion~\eqref{C2}.
The proof is done.
\end{proof}

\section{Proof of Theorem~\ref{thm: model}}
\label{proof: model}
\model*

We consider the distribution $\mathcal{P}_0$ as in Section~\ref{Proof: thmrationality}, that is:
Firstly, let $\E[\x]=\E[\y]=\E[\z]=0$ and $\E [\z\z^\top]\! =\! I$.
Given a marginal distribution of $(\x, \y)$, construct a linear relationship $\y = \bM\z$ where the singular values of matrix $\bM$ be $\sigma_1 =  \dots= \sigma_\dimy = \sigma$.
And choose the penalty $\lambda < \sigma^2$.

Firstly, consider the training process.
Under the assumptions on $\cF_1 \times \cF_2$, there exists a ${f}_q\left({f}_2\left(\cdot\right)\right)$ such that:

(a) on the one hand, ${f}_q\left({f}_2\left(\cdot\right)\right)$ minimize $\mathcal{L}_1$.
$${f}_q\left({f}_2\left(\cdot\right)\right) \in \argmin_f \mathcal{L}_1\left(f\right).$$
Note that ${f}_q{f}_2\left(\x\right) \ci \z$ follows ${f}_2\left(\x\right) \ci \z$.
As a result, due to the assumption $\E \z = 0$, we have 
\begin{equation*}
\argmin_W \|\z - W {f}_q{f}_2\left(\x\right)\|=0, \  \min_W \| \z - W {f}_q{f}_2\left(\x\right) \|^2 = \| \z\|^2.
\end{equation*}

And further notice that for any predictor $f$, we have
\begin{equation*}
\min_W \| \z - W {f}\left(\x\right) \|^2 \leq \| \z - 0 {f}\left(\x\right) \|^2 = \|\z \|^2,
\end{equation*}
therefore, ${f}_q\left({f}_2\left(\cdot\right)\right)$ minimize $\mathcal{L}_1$.

(b) on the other hand,  ${f}_q\left({f}_2\left(\cdot\right)\right)$ makes $\mathcal{L}_2$ equal to zero on the training set.
Due to the definition of model capacity, when $\samples < o\left(\mathcal{M}\left(\mathcal{F}_1, \mathcal{L}\right)\right)$, 
by fixing  ${f}_2$, 
there always exists ${f}_q$ such that it can fit any $n$ samples:
\begin{equation*}
   \min_W \norm{\yMatrix_{down1} - W {f}_q\left({f}_2\left(X_{down1}\right)\right)}^2 = 0. 
\end{equation*}

As a result, ${f}_q\left({f}_2\left(\cdot\right)\right)$ can minimize the training loss since it minimize both $\mathcal{L}_1$ and $\mathcal{L}_2$.
Therefore, any predictor $\hat{f}\left(\cdot\right)$ that minimize the training loss must minimize $\mathcal{L}_1$, leading to
\begin{equation*}
\min_W \E \| \z - W \hat{f}\left(\x\right) \|^2 = \E \|\z \|^2.
\end{equation*}
Next we consider the predictor $\hat{f}\left(\cdot\right)$.

\begin{equation*}
\begin{split}
    &\min_W \E_{\x,\y} \left\| \y - W \hatcFuntion\left(\x\right)  \right\|^2 \\
    =& \E \left(\|\y\|^2\right) - \tr \left[ \E \left( \y \hatcFuntion^\top\left(\x\right)\right) \left(\E \left[ \hatcFuntion\left(\x\right) \hatcFuntion^\top\left(\x\right)\right]\right)^{-1} \E \left( \hatcFuntion\left(\x\right) \y^\top\right) \right]\\
    =& \E \left(\|\y\|^2\right) - \tr \left[\bM \E \left( \z \hatcFuntion^\top\left(\x\right)\right) \left(\E \left[ \hatcFuntion\left(\x\right) \hatcFuntion^\top\left(\x\right)\right]\right)^{-1} \E \left( \hatcFuntion\left(\x\right) \z^\top\right) \bM^\top \right]\\
    =& \E \left(\|\y\|^2\right) - \tr \left[\bM^\top \bM \E \left( \z \hatcFuntion^\top\left(\x\right)\right) \left(\E \left[ \hatcFuntion\left(\x\right) \hatcFuntion^\top\left(\x\right)\right]\right)^{-1} \E \left( \hatcFuntion\left(\x\right) \z^\top\right) \right] \\
    =& \E \left(\|\y\|^2\right) - \tr\left[ \bM^\top \bM \left(\min_W \| \z - W \hat{f}\left(\x\right) \|^2 - \|\z \|^2\right) \right]\\
    =& \E \left(\|\y\|^2\right).
\end{split}
    \end{equation*}
    
Notice that $\min_W \E_{\x,\y} \left\| \y - W f_1\left(\x\right)  \right\|^2 < \E \left(\|\y\|^2\right)$ since $\Cov[f_1(\x), \y] \not=0$. Therefore,
\begin{equation*}
 \E_{\x,\y} \left\| \y - W^*_{\y, \hat{f}\left(\x\right)} \hat{f}\left(\x\right)  \right\|^2 > \min_f  \E_{\x,\y} \left\| \y - W^*_{\y, {f}\left(\x\right)} f\left(\x\right)  \right\|^2,
\end{equation*}
which contradicts to the Criterion~\eqref{C2}.
\section{Illustration of redundant information}

\begin{figure}[t]
    \centering
    \includegraphics[width=0.5 \linewidth]{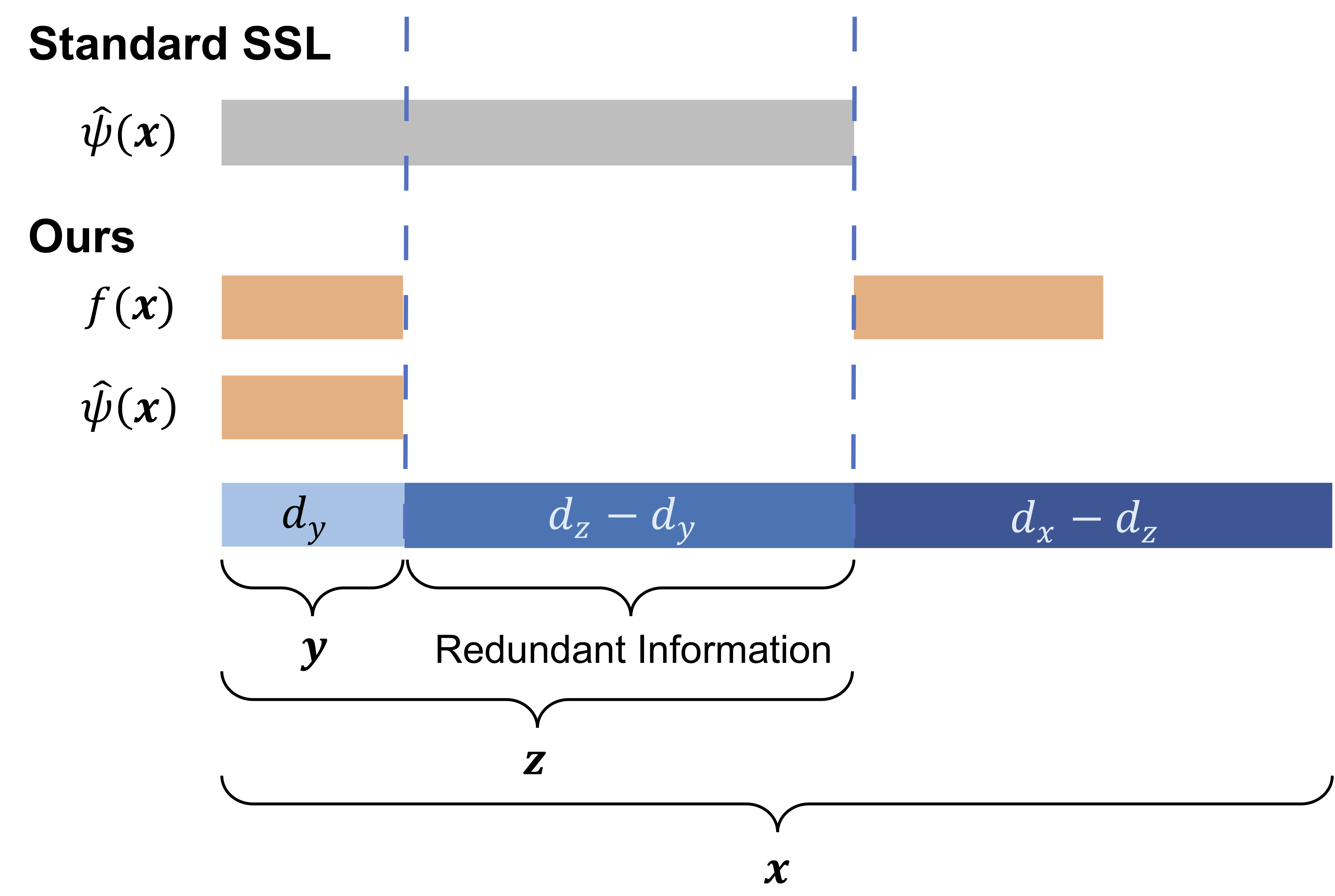}
    \caption{A simple case for understanding redundant information. Here $\y$ is the prefix of $\z$, and $\z$ is the prefix of $\x$. For the standard SSL, the learned representation $\widehat\psi(\x)$ consists of not only the information of $\y$ but also the redundant information in $\z$. For the processor-based SSL, the learned coarse representation $f(\x)$ contains the information of $\y$ and some information of $\x$, and does not include any redundant information in $\z$. Then the final learned representation $\widehat\psi(\x)$ will only contain the information of $\y$, hopefully.}
    \label{fig:denseinfo}
\end{figure}

In the main text, we state that the processor-based SSL aims at eliminating redundant information.
For better understanding, let us consider a simple case.
Suppose $\x$ is a $\dimx$-dimensional random vector.
Let $\z$ and $\y$ consist of the first $\dimz$ and $\dimy$ elements of $\x$, respectively, i.e., $\z=\x[ 0\mycolon\dimz-1]$ and $\y=\x[0\mycolon\dimy-1]$.
When we execute the standard self-supervised learning (with linear representation $\hat{\psi}$) using infinite pretext samples,
the learned representation has rank $\dimz$ which consists of $\dimy$-dimensional useful information and $(\dimz-\dimy)$-dimensional redundancy information that we cannot eliminate (See Figure~\ref{fig:denseinfo}).
Therefore, we need $\tilde{\O}(\dimz)$ samples to train the liner layer $\widehat{W}$ in the downstream tasks.
In comparison, the processor approach requires a function $\cFuntion$ that eliminates the redundancy information at the beginning, i.e., $\cFuntion(\x) = \x[ 0\mycolon\dimy-1, \dimz\mycolon\dimx]$.
Hopefully, $\cFuntion(\x)$ will consist of all the useful information and some information of $\x[\dimz\mycolon\dimx-1]$, without any information of $\x[\dimy\mycolon\dimz-1]$.
And then we apply pretext task on ($\cFuntion(\x), \z$), which further removes the information of $\x[\dimz\mycolon\dimx-1]$.
Thus, the rank of representation $\widehat\psi(\x)$ could be reduced to $\dimy$. 
In this way, we will only need $\tilde{\O}(\dimy)$ samples for the downstream task.

\section{Experimental details}

\subsection{Experimental details for synthetic data}
\textbf{Datasets and environment.}
The dataset is simulated as follows.
Firstly, we randomly generate $n_1$ pretext samples $\{(x^i_{pre}, z^i_{pre})\}_{i\in [n_1]}$, where $x^i_{pre} \in \R^\dimx$ is generated from Gaussian distribution $\mathcal{N}(0,I_{d_x})$, and $z^i_{pre} \in \R^\dimz$ is a perturbation of the first $\dimz$ elements of $x^i_{pre}$, namely, $z^i_{pre} = x^i_{pre}[0:\dimz] + 0.01\cdot  \varepsilon_1^i$, where $\varepsilon_1^i\sim \mathcal{N}(0,I_{d_z})$.
For the ease of notations, we present the data using sample form instead of the matrix form ($X_{pre}$, $Y_{pre}$).

Similarly, we generate $n_2$ downstream samples $\{(x^i_{down}, y^i_{down})\}_{i\in[n_2]}$, where $x^i_{down} \in \R^\dimx$ is generated from Gaussian distribution $\mathcal{N}(0,I_{d_x})$ and $y^i_{down}\in\R^\dimy$ is a perturbation of the first $\dimy$ elements of $x^i_{down}$, i.e., $y^i_{down}= x^i_{down}[0\mycolon\dimy] + 0.01 \cdot \varepsilon_2^i$, where $\varepsilon_2^i\sim \mathcal{N}(0,I_{d_y})$.
For the processor training procedure, we use $n_0$ of downstream data as $\{(x^i_{down1}, y^i_{down1})\}_{i\in[n_0]}$, and the rest $n_2 - n_0$ samples are used in downstream tasks (denoted by $\{(x^i_{down2}, y^i_{down2})\}_{i\in[n_2-n_0]}$).
We require $n_0 = \alpha n_2$, and setting $\alpha = 0.5$ without loss of generality.
We generate $n_t$ test samples under the same procedure.

In each run of our proposed algorithm, we first use $n_0$ samples $\{(x^i_{down1}, y^i_{down1})\}_{i\in[n_0]}$ and $n_1$ pretext samples to train the processor.
Then we apply the processor and the $n_1$ pretext samples $n_2-n_0$ downstream samples to finish the training.
For a fair comparison, we use $n_1$ pretext samples to train the feature during the SSL procedure and then use $n_2$ downstream samples to learn the linear layer.
We run each experiment 5 times repeatedly and calculate its mean and standard deviation. 
We plot their 95\% confidence bands in all the figures.

\textbf{The rationality of the loss $\mathcal{L}(f)$.} 
Set $\dimx = 100$, $\dimz=80$, $\dimy=5$, $\dimc=5$, $n_1 = 20000$ and $\lambda = 0.1$.
We plot the MSE as $n_0$ varies from 15 to 70 in Figure~\ref{fig: changenxy}.
With large $n_0$, the downstream tasks benefit from small MSE, showing that the algorithm indeed finds the proper coarse representations.
Theorem~\ref{thm: rationality} demonstrates this phenomenon, which claims that with sufficient downstream task samples, the proposed algorithm finds the proper coarse representations.

We further remark that in the case $n_0 = 40$, the SSL suffers from a large MSE. 
It is related to the ``double decent'' phenomenon in the downstream tasks, and MSE reaches the maximum when $n_0 = \dimz$.
We finally remark that when $n_0 \in [20, 40]$, the proposed algorithm outperforms SSL in terms of MSE.

\textbf{Large coefficient $\lambda$ harms the performance.}
Set $\dimx = 100$, $\dimz=80$, $\dimy=5$, $\dimc=5$, $n_1 = 20000$ and $n_2 = n_0 = 15$.
We test the case $\lambda$ varies from $0.1$ to $1.5$, as plotted in Figure~\ref{fig: changelamb}.
Figure~\ref{fig: changelamb} illustrates that when $\lambda$ is large, the algorithm suffers from unsatisfying performances.
Theorem~\ref{thm: rationality} demonstrates the phenomenon.
Intuitively, that is because $\z$ contains the information of $\y$.
With a large coefficient $\lambda$, the trained classifier tends to eliminate the information of $y$, leading to a lousy model performance (MSE).

\textbf{Large dimension $\dimc$ harms the performance.}
Set $\dimx = 100$, $\dimz=80$, $\dimy=5$, $n_1 = 20000$, $n_2 = n_0 = 15$, and $\lambda=0.1$.
We test the case $\dimc$ varies from $1$ to $12$, and plot them in Figure~\ref{fig: changec}.
Firstly, notice that MSE reaches the minimum when $\dimc = \dimy$, that is because we force the coarse representation to have no redundancy information by limiting its dimension.
Secondly, notice that the figure shows a Basin-type phenomenon and reaches the minimum when $\dimc = \dimy$.
On the one hand, when $\dimc < \dimy$, the coarse representation is forced to lose the information of $y$, leading to the underfitting phenomenon.
On the other hand, when $\dimc > \dimy$, the performance of coarse representation is limited by the downstream sample numbers, as illustrated in Theorem~\ref{thm: warm-up} and Theorem~\ref{thm: model}.

\subsection{Experimental details for real-world data}
All experiments can be handled with an RTX 2080Ti graphic card. 
To simplify the comparison, we require that the processor-training method uses the same downstream samples in downstream tasks as standard SSL.
Therefore, the processor training process is extra compared to standard SSL.
We emphasize that the processor training method in extra-sample settings performs better than the settings in the main text (which split some downstream samples to train the processor), and the negative results in such extra-sample settings are stronger.

\textbf{Data split strategy.}
We first classify the samples according to their categories and sort them according to their index in the CIFAR-10 dataset. And then pick the correct number of samples in each class one by one without overlapping. For each subset, we assure that there is the same number of samples for each class. 

\textbf{Model training.}
For standard SSL training, we directly follow all the hyperparameters and training strategies from~\cite{rotation} except for the training set size. 

For processor-based learning, we also copy the hyperparameters from the standard setting, which is necessary. There are two additional SGD optimizers for training $W^*_{\y, \cFuntion(\x)}$ and $W^*_{\z, \cFuntion(\x)}$, which are two linear layers with the same input size and different output size in our implementation. We set the learning rate as 0.1 and train 20 times before calculating the loss function used to update $f$. When optimizing Equation~\eqref{eqn: lossterm}, $\lambda$ is set to 0.1 to balance the two loss terms in Equation~\eqref{eqn: lossterm}.

For the half NIN model, we half the channel number in each block. Consequently, the size of all linear layers should also be changed.

\end{document}